\documentclass[12pt]{article}
\usepackage{amsmath}
\usepackage{graphicx}
\usepackage{natbib}
\usepackage{url} 
\usepackage{tikz-qtree}
\usepackage{graphicx,psfrag,epsf}
\usepackage{enumerate}
\usepackage{natbib}
\usepackage{amsmath}
\usepackage{amsthm}
\usepackage{amssymb}
\def\argmin{\mathop{\rm argmin}}

\def\RR{\mathbb R}
\def\ZZ{\mathbb Z}
\usepackage{multirow}
\usepackage{floatflt}
\usepackage{caption}
\usepackage{color}
\usepackage{nameref,hyperref}
\usepackage{pifont}
\usepackage{mathtools}
\usepackage{csvsimple}
\usepackage{algorithm}
\usepackage{algpseudocode}

\newcommand{\bd}{\mathbf{d}}

\newcommand{\bl}{\mathbf{l}}

\newcommand{\bx}{\mathbf{x}}

\newcommand{\by}{\mathbf{y}}

\DeclareGraphicsExtensions{.jpg,.pdf,.mps,.png}
\def\argmin{\mathop{\rm argmin}}

\def\RR{\mathbb R}
\def\ZZ{\mathbb Z}

\newcommand{\bmu}{{\boldsymbol{\mu}}}

\newcommand{\N}{{\cal N}}

\graphicspath{{images/}{pics/}{}}
\newtheorem{proposition}{Proposition}
\newtheorem{theorem}{Theorem}
\newtheorem{corollary}{Corollary}
\newtheorem{assumption}{Assumption}
\newtheorem{lemma}{Lemma}

\newcommand{\blind}{0}

\addtolength{\oddsidemargin}{-.5in}%
\addtolength{\evensidemargin}{-.5in}%
\addtolength{\textwidth}{1in}%
\addtolength{\textheight}{1.3in}%
\addtolength{\topmargin}{-.8in}%
\begin{document}

\def\spacingset#1{\renewcommand{\baselinestretch}%
{#1}\small\normalsize} \spacingset{1}


\if0\blind
{
  \title{\bf Scalable Clustering: Large Scale Unsupervised Learning of Gaussian Mixture Models with Outliers}
  \author{Yijia Zhou
    \hspace{.2cm}, Kyle A. Gallivan\\
    Department of Mathematics, Florida State University\\
    and \\
    Adrian Barbu \\
    Department of Statistics, Florida State University}
  \maketitle
} \fi

\if1\blind
{
  \bigskip
  \bigskip
  \bigskip
  \begin{center}
    {\LARGE\bf Title}
\end{center}
  \medskip
} \fi

\bigskip
\begin{abstract}
Clustering is a widely used technique with a long and rich history in a variety of areas. However, most existing algorithms do not scale well to large datasets, or are missing theoretical guarantees of convergence. 
This paper introduces a provably robust clustering algorithm based on loss minimization that performs well on Gaussian mixture models with outliers. 
It provides theoretical guarantees that the algorithm obtains high accuracy with high probability under certain assumptions. 
Moreover, it can also be used as an initialization strategy for $k$-means clustering. 
Experiments on real-world large-scale datasets demonstrate the effectiveness of the algorithm when clustering a large number of clusters, and a $k$-means algorithm initialized by the algorithm outperforms many of the classic clustering methods in both speed and accuracy, while scaling well to large datasets such as ImageNet.
\end{abstract}

\noindent%
{\it Keywords:}  $k$-means, Gaussian mixture models, clustering

\spacingset{1.45}
\section{Introduction}
\label{sec:intro}
Clustering is an important unsupervised learning technique with applications in many areas including information retrieval \citep{jardine1971use}, image segmentation \citep{coleman1979image}, pattern recognition \citep{diday1981clustering}, data mining \citep{mirkin2005clustering}, disease diagnosis \citep{alashwal2019application}, and more.  
One of the most commonly used non-probabilistic clustering approaches is the $k$-means algorithm \citep{lloyd1982least}. 
Probabilistic clustering models, which can be characterized as more sophisticated versions of $k$-means, are based on Gaussian Mixture Models (GMM) and yield more flexibility than $k$-means. 
With GMMs, it is assumed that the data points are Gaussian distributed; this is a less restrictive assumption than saying they are circular around a mean. In this way, a mean vector and the covariance matrix can be used to describe a cluster.

The motivation for this work comes from the problem of object recognition from images.  
An image usually contains one or more regions/objects of interest and the rest is meaningless background. 
This paper introduces a Gaussian mixture model with outliers where the Gaussian mixture components represent the objects of interest (positives), and the outliers (negatives) represent the background images that do not cluster together.
However, since data is usually standardized to zero mean and standard deviation one, it is assumed that the outliers come from a zero mean Gaussian distribution.

In this paper, we are only interested in scalable clustering - methods that scale well to data with millions of observations, thousands of dimensions and a large number (e.g. thousands) of clusters. 
Furthermore, the proposed algorithm has theoretical guarantees of convergence of the estimated clusters to the true cluster labels.

The main contributions of this paper are summarized as follows:
\begin{enumerate}
    \item The Gaussian mixture model with outliers is introduced as a simple framework for image classification problems in computer vision.
    \item A novel clustering algorithm, Scalable Clustering by Robust Loss Minimization (SCRLM), is developed for the model.
    The basic idea of SCRLM is to find the positive clusters as local minima of a robust loss function that is non-zero only within a certain radius from the cluster centers and zero everywhere else, and extract the clusters one-by-one.
    \item Theoretical guarantees are given that SCLRM is able to correctly cluster all the inliers and detect all the outliers with high probability under certain assumptions.
    \item The performance predictions are validated with experiments using simulated data and real data. The simulation results indicate that when the assumptions are met, SCRLM outperforms other algorithms such as $k$-means++, EM, t-SNE and spectral clustering.
    \item Experiments on real data indicate that SCRLM is very effective when the number of clusters and the data dimension are large, and it can be used as an initialization method for $k$-means clustering, outperforming $k$-means++ in accuracy and computation time.
\end{enumerate}
The rest of the paper is organized as follows. In Section \ref{sec:literature},  an overview of the literature on various existing clustering methods are given. Section \ref{sec:scrlm} will develop a novel algorithm, SCRLM, to solve the
clustering problem in the Gaussian mixture model with outliers, and the theoretical guarantees will be derived. In Section \ref{sec:exp},
experiments on both synthetic and real data will show that the proposed algorithm is scalable, efficient and accurate. Section \ref{sec:conc} summarizes the findings and concludes the paper with a
discussion of future research work.

\section{Literature Reviews}
\label{sec:literature}

The study of Gaussian mixture models can be traced back to \cite{pearson1894contributions}. 
The idea of using Gaussian mixtures in unsupervised learning was popularized by \cite{duda1973pattern}. 
The Expectation Maximization (EM) algorithm \citep{dempster1977maximum} was one of the first clustering algorithms for GMM.
\citet{xu1996convergence} analyzed the convergence of EM for well-separated Gaussian mixtures. \citet{dasgupta2007probabilistic} proposed a two-round variant of the EM algorithm and showed that, with high probability, it can recover the parameters of the Gaussians to near-optimal precision. 
In recent years, approaches have been proposed to improve convergence guarantees and applied to different kinds of GMMs.
\citet{dwivedi2018theoretical} provided theoretical guarantees in two classes of misspecified mixture models and \citet{segol2021improved} improved sample size requirements for accurate estimation by EM and gradient EM.

Tensor Decomposition, a spectral decomposition technique, also played an important role in learning GMMs. 
\citet{hsu2013learning}  developed a method based on moments with up to third order and provided theoretical guarantees that non-degenerate mixtures of spherical gaussians can be learned in polynomial time without any separation condition.

The algorithms above, designed for GMMs, are categorized as distribution-based clustering. Aside from those methods, there are other clustering methods that do not use statistical distributions to cluster the data objects.


Hierarchical clustering \citep{johnson1967hierarchical},  also known as connectivity-based clustering, creates a complete dendrogram  of the data. 
It is either agglomerative (bottom-up) or divisive (top-down). 
In agglomerative hierarchical clustering, the similarities between clusters are measured by distances between points, which is referred as linkage. In complete linkage, the distance between the farthest points are taken as the intra cluster distance which is less susceptible to outliers than single linkage \citep{tan2016introduction}. The main disadvantage of hierarchical clustering is, due to high time and space complexity, it is not suitable for large-scale datasets.

In contrast to hierarchical clustering, $k$-means \citep{lloyd1982least} is one of the  most famous centroid-based clustering  algorithms, which works by minimizing the squared distances between every point and its nearest centroid. 
Since $k$-means is an iterative algorithm involving initialization, clustering and centroids
updates, proper initialization techniques such as Maxmin \citep{gonzalez1985clustering}, Refine \citep{bradley1998refining} and $k$-means++ \citep{arthur2006k} have been proposed to improve the clustering results. 
\citet{franti2019much} demonstrated that for well-separated clusters, the performance of $k$-means depends completely on the goodness of initialization and $k$-means++ is the best one among those methods.
 
Spectral clustering  \citep{donath1973lower,shi2000normalized,meilua2001random,ng2002spectral,von2007tutorial} is a graph-based clustering algorithm that utilizes the eigenvectors of the adjacency matrix for dimension reduction. It is simple to implement but computationally expensive unless the graph is sparse and the similarity matrix can be efficiently constructed. \citet{vempala2004spectral} investigated the theoretical performance of spectral clustering in the isotropic Gaussian mixture model and proved that with high probability, exact recovery of the underlying cluster structure was achieved under a strong separation condition. 
\citet{loffler2021optimality} showed that spectral clustering is minimax optimal in Gaussian mixture models with isotropic covariance, when the number of clusters is fixed and the signal-to-noise ratio is large enough.

$t$-distributed stochastic neighbor embedding (t-SNE) \citep{van2008visualizing} is a technique that visualizes high-dimensional data that is usually processed before clustering. 
$k$-means++ and other clustering algorithms can be applied to the low-dimensional feature space obtained from t-SNE.

In summary, $k$-means++ has high scalability  with theoretical guarantees but it does not perform well in high dimension and is very sensitive to outliers. 
Spectral clustering and tensor decomposition are not sensitive to outliers but do not perform well on large-scale and high dimensional data. 
The proposed SCRLM method is a novel algorithm with efficiency and  strong theoretical guarantees  in dealing with outliers, scalability, and is suitable for high dimensional data.

\section{Scalable Clustering by Robust Loss Minimization}
\label{sec:scrlm}

Given a set $X = \{\bx_i \in \RR^p, i =1, ...,N\}$ of $N$ points from a Gaussian mixture model with outliers containing $m$ Gaussians, the goal is to group these points into $m$ compact subsets. 
In this paper, we only deal with large and high-dimensional data and a reasonably large number of clusters, specifically, $N \approx 10^6, p \approx 10^3, m \approx 10^3$.

A Gaussian mixture model with outliers is a weighted sum of $m$ component Gaussian densities and outliers as given by the equation,
\begin{equation}\label{equ:GMM}
p(\mathbf{x} \mid \Theta)=\sum_{i=1}^{m} w_{i} \N\left(\mathbf{x} \mid \boldsymbol{\mu}_{i}, \Sigma_{i}\right)
+w_{-1} O(\bx)
\end{equation}
where $\mathbf{x}\in \RR^p$ is a $p$-dimensional data point, $w_{i}, i\in \{-1, 1,2, \ldots, m\}$, are the mixture weights, $\N\left(\mathbf{x} \mid \boldsymbol{\mu}_{i}, \Sigma_{i} \right), i=1, \ldots, m$, are the component Gaussian densities and $O(\bx)$ is the distribution of the outliers. 

It is assumed that each component density is a $p$-variate isotropic Gaussian function of the form,
\begin{equation}
\N\left(\mathbf{x} \mid \boldsymbol{\mu}_{i}, \Sigma_{i} \right)=\N\left(\mathbf{x} \mid \boldsymbol{\mu}_{i}, \sigma_{i}^2\right)=\frac{1}{(2 \pi)^{p / 2}\left|\sigma_{i}^2I_p\right|^{1 / 2}} \exp \left\{-\frac{1}{2}\left(\mathbf{x}-\boldsymbol{\mu}_{i}\right)^{T} (\sigma_{i}^2I_p)^{-1}\left(\mathbf{x}-\boldsymbol{\mu}_{i}\right)\right\}
\end{equation}
with mean vector $\boldsymbol{\mu}_{i}$ and covariance matrix $\Sigma_i=\sigma_{i}^2I_p$.

Let  $l(\bx)\in \{-1, 1,2, \ldots, m\}$ be the label of observation $\bx$, i.e. the mixture component from which it was generated.
The samples $\bx_i$ with $l(\bx_i)>0$ are called positives and the outliers (with $l(\bx_i)=-1$) are also called negatives.

Inspired by real data examples, where the observations are standardized feature vectors generated by a convolutional neural network (CNN) from real images of certain objects or background, in this paper, it is assumed that the $m$ centers $\bmu_i, i =1, ...m$ and all outliers are generated from $O(\bx)=\mathcal{N}\left(\boldsymbol{0},I_{p}\right)$ and the label $i$ positives are generated with frequency $w_i$ from $\mathcal{N}(\bmu_i,\sigma_i^2)$, where $\sigma_i<1, \forall i$.
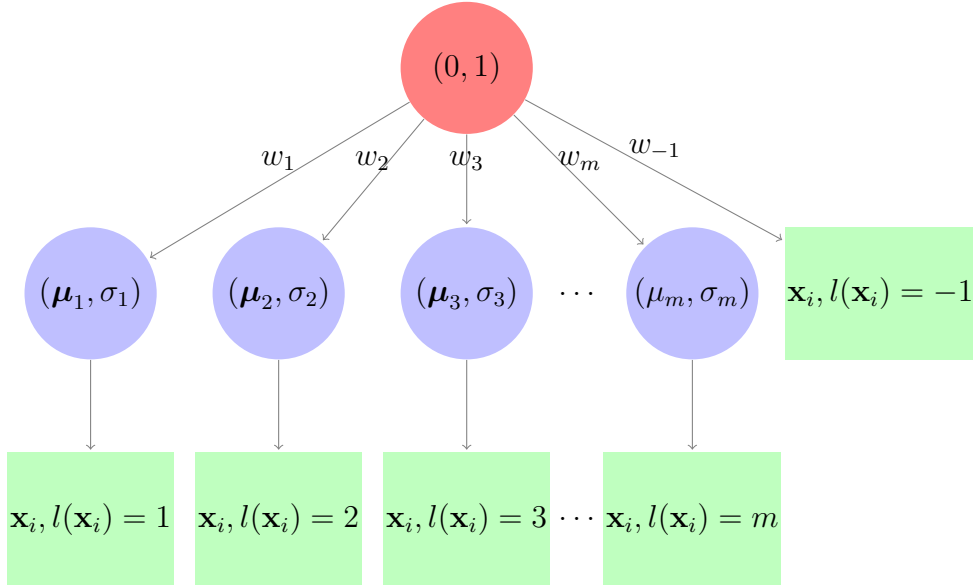
\begin{figure}[t]
        \begin{center}
    \begin{tikzpicture}[
   shorten >=1pt,->,
   draw=black!50,
    node distance=\layersep,
    every pin edge/.style={<-,shorten <=1pt},
    positive/.style={circle,fill=blue!25,minimum size=50pt,inner sep=1pt},
    obs/.style={rectangle,fill=green!25,minimum size=50pt,inner sep=1pt},
    level-0/.style={positive, fill=red!50},
    level1/.style={positive},
    outliers/.style={obs},
    annot/.style={text width=8em, text centered}
]
    \node[level-0] (0) at (5,0) {$(0,1)$};  
    \node[level1] (H1-1) at (0,-3) {($\bmu_1,\sigma_1$)};
    \node[level1] (H1-2) at (2.5,-3) {($\bmu_2,\sigma_2$)};
    \node[level1] (H1-3) at (5,-3) {($\bmu_3,\sigma_3$)};
    \node at (6.5,-3) {$\dots$};
    \node at (6.5,-6) {$\dots$};
    \node[level1] (H1-4) at (8,-3) {($\mu_m,\sigma_m$)};
     \node[outliers] (H1-5) at (10.5,-3) {$\bx_i,l(\bx_i)=-1$};
      \node[obs] (O1) at (0,-6) {$\bx_i,l(\bx_i)=1$};
    \node[obs] (O2) at (2.5,-6) {$\bx_i,l(\bx_i)=2$};
     \node[obs] (O3) at (5,-6) {$\bx_i,l(\bx_i)=3$};
     \node[obs] (O4) at (8,-6) {$\bx_i,l(\bx_i)=m$};
     \path (0) edge
     node[above]{$w_1$} (H1-1) ;
    \path (0) edge node[above]{$w_2$} (H1-2);
     \path (0) edge
     node[above]{$w_3$} (H1-3);
      \path (0) edge  node[above]{$w_m$} (H1-4);
      \path (0) edge
    node[above]{$w_{-1}$} (H1-5) ;
   \path (H1-1) edge  (O1);
 \path (H1-2) edge  (O2);
  \path (H1-3) edge  (O3);
   \path (H1-4) edge  (O4);
\end{tikzpicture}
\caption{Structure of the Gaussian mixture model with outliers used in this paper.}
\label{fig:structure of GMM}
\end{center}
    \end{figure}
 The structure of the Gaussian mixture model with outliers used in this paper is shown in Figure \ref{fig:structure of GMM}.

The problem of interest is to cluster a set of unlabeled observations generated from such a Gaussian mixture model with outliers and recover the labels $l(\bx_i)$ as well as the mixture parameters $w_i,\bmu_i, \sigma_i$.

\subsection{Robust Loss Function}

A loss minimization approach is taken, using the following loss function,

\begin{equation}\label{eq:lossfn}
L(\bx; \rho)=\sum_{i=1}^{N} \ell\left(\mathbf{x}_{i}-\bx; \rho\right)= \sum_{i=1}^{N} \min \left(\frac{\|\bx_i-\bx\|^{2}}{p \rho^{2}}-F, 0\right)
\end{equation}
where the per-observation loss, illustrated in Figure \ref{fig:robust_loss_function} is,
\begin{equation}
\ell(\bd; \rho)=\min \left(\frac{\|\bd\|^{2}}{p \rho^{2}}-F, 0\right).
\label{eq:loss1}
\end{equation}
The loss function $\ell(\bd; \rho)$ is zero outside a ball of radius $R_{\rho}=\rho \sqrt{p F}$.
The constant $F$  is set in this paper to $F=2.5$.

\begin{figure}[ht]
\centering
\begin{tabular}{cc}
 \includegraphics[width=0.45\linewidth]{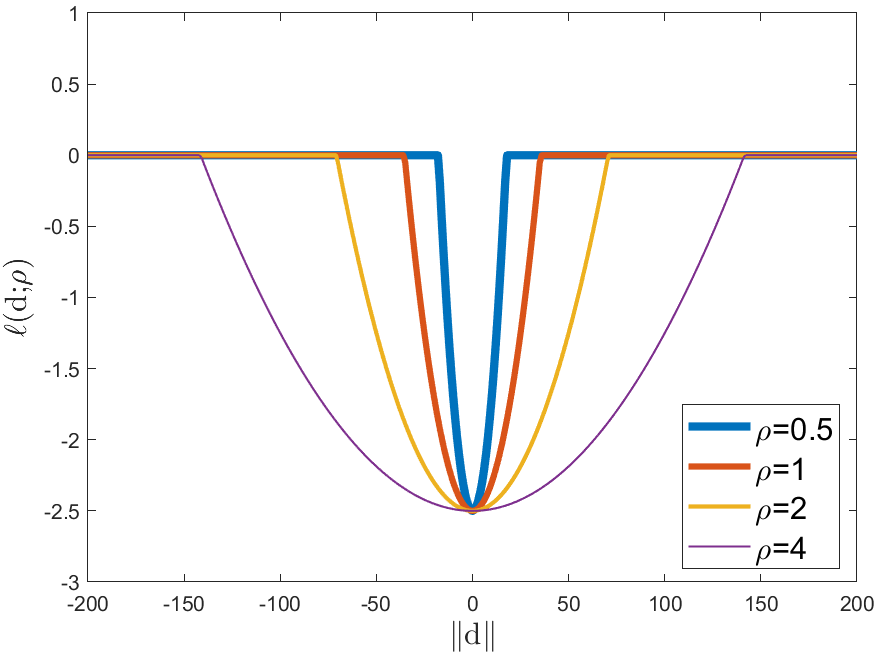}    
 & \includegraphics[width=0.45\linewidth]{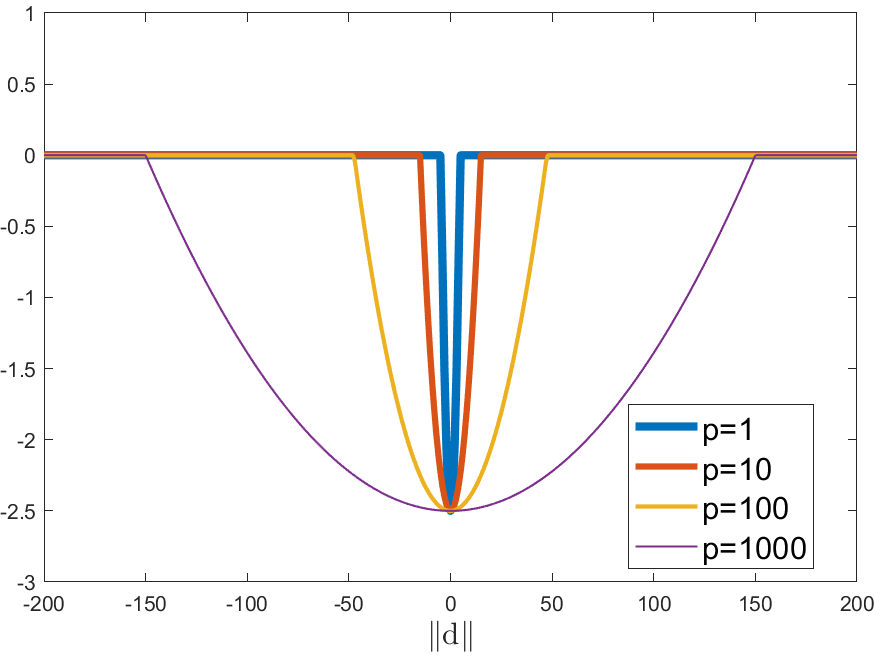} \\
 a) $p = 500$ & b) $\rho = 3$ \\
\end{tabular}
\vskip -3mm
\caption{The robust loss function $\ell(\bd; \rho)$ for different values of $p$ and $\rho$. \label{fig:robust_loss_function}} 
\end{figure}

The idea of the algorithm is to find the cluster centers $\boldsymbol{\mu}$ as local minima of the loss function \eqref{eq:lossfn}. For computational reasons, the centers $\bmu$ are sought among a subsample $S\subset \{1,...,N\}$ of the observations $\bx_i,i=1,...,N$,
\begin{equation}
    \bmu=\bx_k, \text { where }k =\argmin \limits_{i\in S} L\left(\boldsymbol{\bx_i}; \rho\right).\label{eq:findmu}
\end{equation}

After one cluster center $\bmu=\bx_k$ has been found, all samples from $S$ within the radius $\rho\sqrt{pF}$ from the $\bmu$ are considered as belonging to this cluster and are removed. The process is repeated until $\min\limits_{j \in S} L\left(\boldsymbol{\bx_j}; \rho\right)=-F$. 
\begin{figure}[ht]
\centering
\begin{tabular}{cc}
 \includegraphics[width=0.4\linewidth]{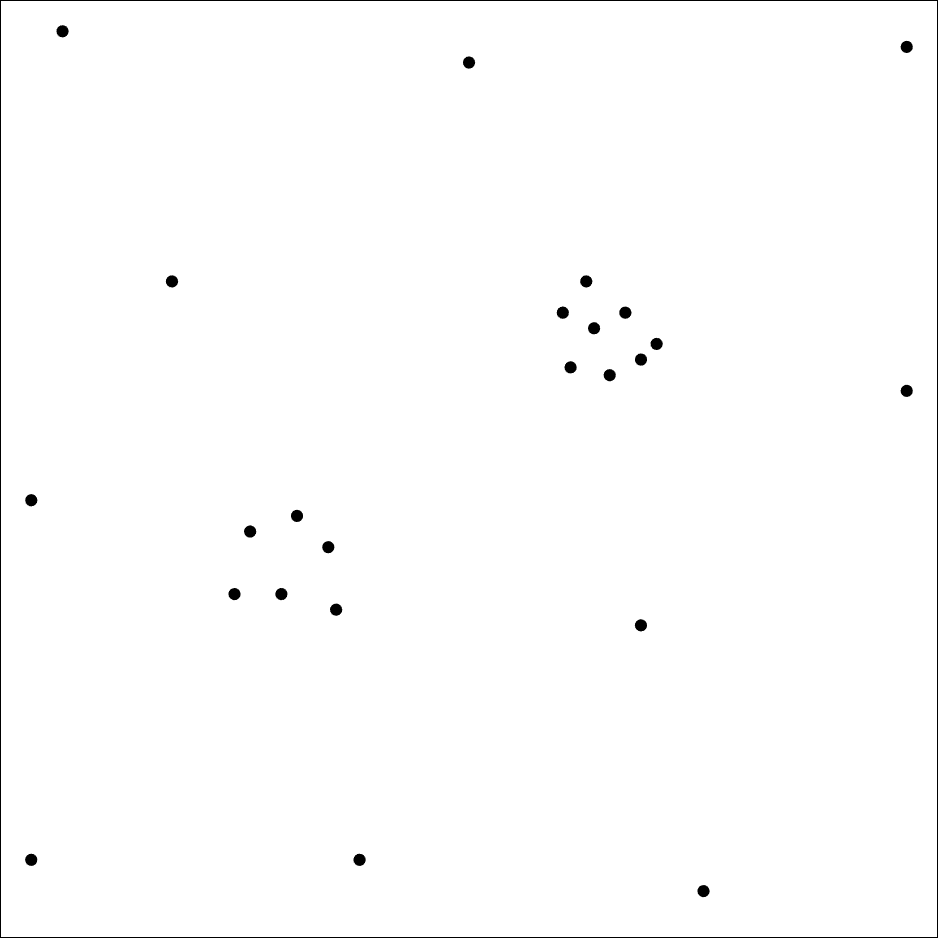}  
 & \includegraphics[width=0.4\linewidth]{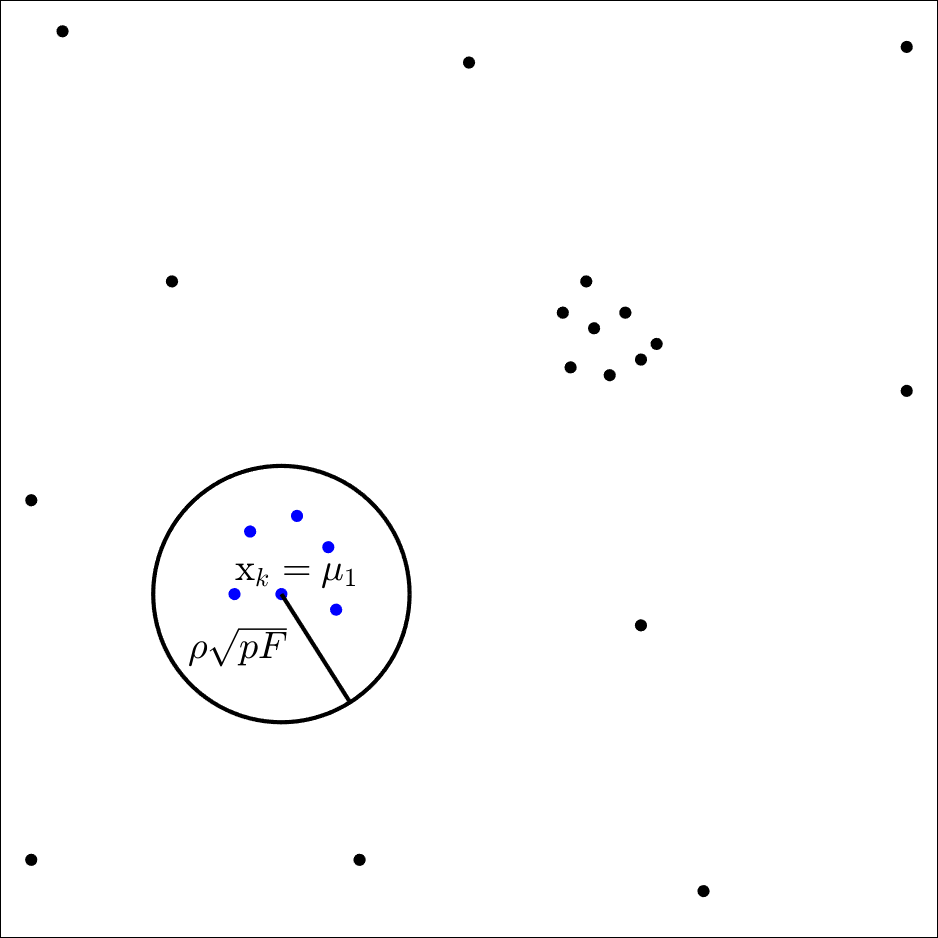} \\
 a) & b)\\
\includegraphics[width=0.4\linewidth]{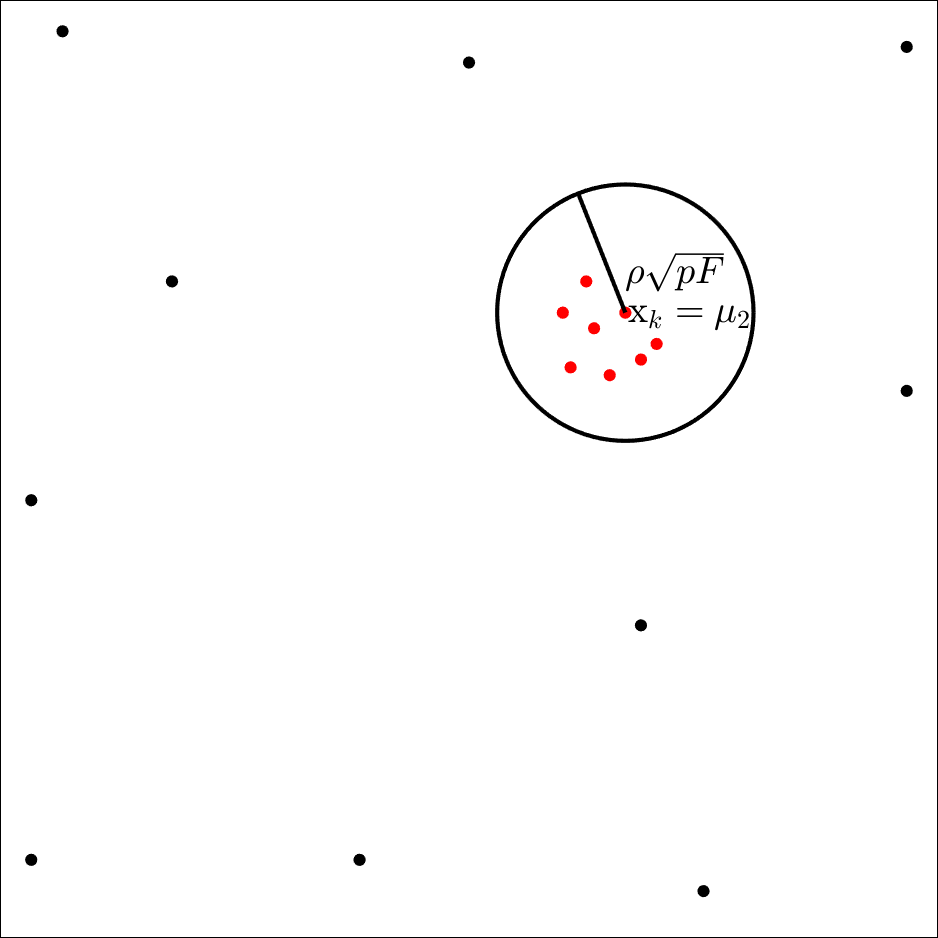}     
&\includegraphics[width=0.4\linewidth]{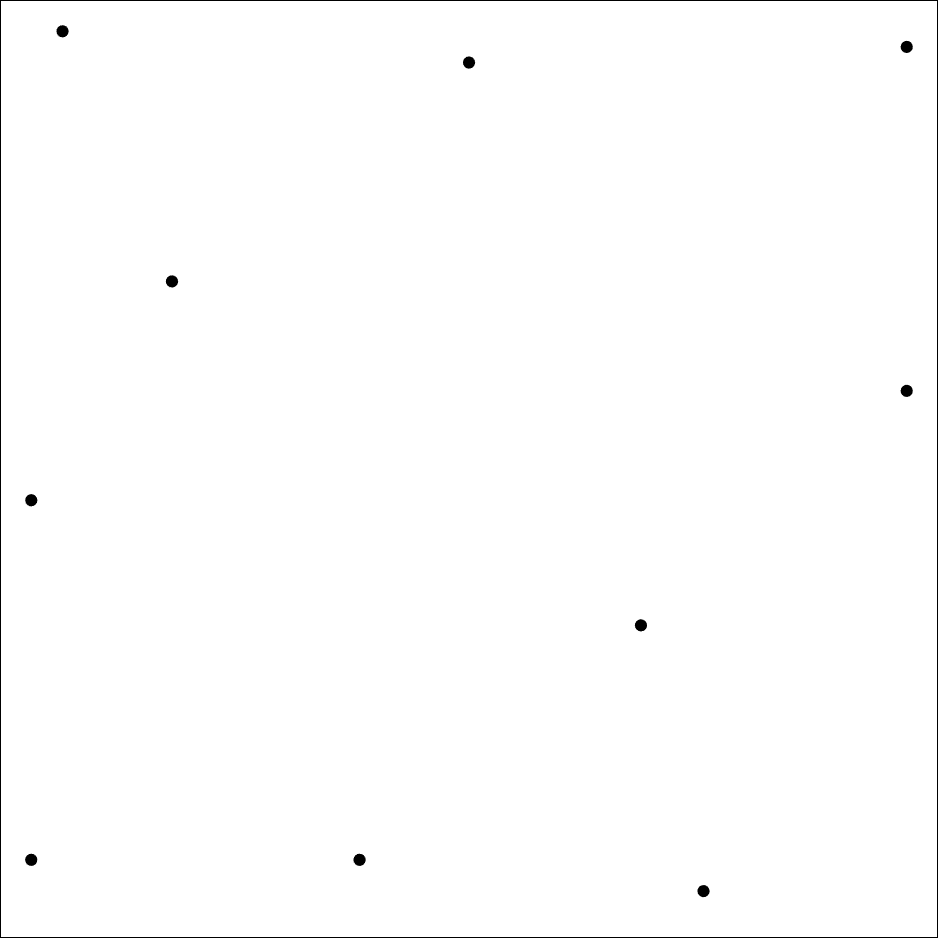}\\
c) & d)
\end{tabular}
\vskip -3mm
\caption{Diagram illustrating Algorithm \ref{alg:scRLM}. \label{fig:find}} 
\end{figure}

The process is illustrated in Figure \ref{fig:find}. Suppose there are two clusters and some outliers, as shown in Figure \ref{fig:find}a). 
The first cluster center is found as the sample with minimum loss \eqref{eq:lossfn} among all subsamples.
Then, all subsamples within the given radius $R=\rho\sqrt{p F}$ to this center are identified as belongings to this cluster (Figure \ref{fig:find}b)) and are removed.
Then the process is repeated to find another cluster center (Figure \ref{fig:find}c)) and all subsamples within the same radius are removed. 
Now all the subsamples have loss values $-F$, which means there are no clusters left, so the remaining subsamples are negatives (Figure \ref{fig:find}d)).

Once all the cluster centers are found, the label of each observation is assigned to its nearest center based on the norm distances. If the nearest distance is greater than $\rho \sqrt{pF}$, it is classified as an outlier. The procedure is summarized in Algorithm \ref{alg:scRLM}.
\begin{algorithm}[!ht]
\caption{Scalable Clustering by Robust Loss Minimization (SCRLM)}
		\label{alg:scRLM}
		\begin{algorithmic}[1]
			\State {{\bf Input:} $X=\{\boldsymbol{\bx_1},...,\boldsymbol{\bx_N}\}\subset \RR^p$, the bandwidth parameter $\rho$, the desired number of cluster $T$.}
		\State{{\bf Output:} the number of clusters $m$, the centers $\mu_t$ for $t=1,...,m$, cluster labels $l_1,...,l_N\in \{-1,1,2,...,m\}$.}
		\State{Randomly select a set of $n$ indices $S=\{i_1,...,i_n\}\subset \{1,...,N\}$ without replacement from $\{1,...,N\}$.}
			\State Compute $L(\boldsymbol{\bx_j}; \rho)=\sum\limits_{i=1}^{N} \ell\left(\mathbf{x}_{i}-\boldsymbol{\bx_j}; \rho\right)$
			$\forall j \in S$, where $\ell(\bd; \rho)=\operatorname{min} \left(\frac{\|\bd\|^{2}}{p \rho^{2}}-F, 0\right)$.
			\For{$t$ = 1 to $T$}
			\State Find $ k=\argmin\limits_{j \in S} L\left(\boldsymbol{\bx_j}; \rho\right)$
			\If{$L\left(\boldsymbol{\bx_k}; \rho\right)<-F$}
			\State Obtain one positive cluster center as $\bmu_t=\bx_k$
			\State Update $S\leftarrow S-\left\{i \in S,\left\|\boldsymbol{\bx_i}-\bmu_t\right\|<\rho\sqrt{p F}\right\}$
			\Else
			\State break
			\EndIf
			\EndFor
    
	\For{$i$ = 1 to $N$}
	    \State Compute $k=\argmin\limits_{j} \|\bx_i-\bmu_j\|$
	    \If{$\|\bx_i-\bmu_k\|<\rho\sqrt{p F}$}
		\State $l_i=k$
		\Else
		\State $l_i=-1$
		\EndIf
    \EndFor
		\end{algorithmic}
\end{algorithm}

\subsection{Theoretical Guarantees}
First, we summarize the notations used in this paper and the main assumption used in the derivation of the main result.
\begin{itemize}
\item $N$: the total number of observations
\item $S$: a subset of all the observations
\item $n$: the cardinality of $S$, $n=|S|$
\item $p$: the dimension of the observations $\bx_{i} \in \mathbb{R}^{p}$
\item $l(\bx)$: the true label (cluster assignment) of observation $\bx$
\item $m$: the true number of positives clusters 
\item $T$: the number of iterations (maximum number of clusters desired) in Algorithm \ref{alg:scRLM}
\item $S_{k}$: the elements of $S$ with label $k$, $S=\{\bx \in S, l(\bx)=k\}$ 
\item $w_{k}$: the weight of positive cluster $k$, $k=\overline{1,m}$
\item $w_{-1}$: the weight of negatives (observations $\bx$ with $l(\bx)=-1$)
\item $\bmu_{k}, \sigma_{k}$: true mean and standard deviation of positive cluster $k$
\item $\sigma_{max}=\max_{k\geq 0} \sigma_k$: the maximum standard deviation among all positive clusters
\item $H$: the set of all the negatives
\item $F$: a constant in the loss function \eqref{eq:lossfn}, in this paper, $F=2.5$
\item $\rho$: the bandwidth parameter in the loss function \eqref{eq:lossfn}
\item $R_{\rho}=\rho \sqrt{p F}$: the radius of the support of the loss function \eqref{eq:lossfn}
\item $L(\bx;\rho)$: the loss function \eqref{eq:lossfn}
\item $\ell(\bx;\rho)$: the per-observation loss function \eqref{eq:loss1} 
\end{itemize}
The following is the main assumption that is needed for the theoretical guarantees.
\begin{assumption}
\label{assumption1}
$\sigma_{max} \leq \rho<\sqrt{0.6}$, where $\rho$ is the bandwidth parameter for the loss function \eqref{eq:lossfn}.
\end{assumption}
Then, we obtain the following theorem guaranteeing that Algorithm \ref{alg:scRLM} (SCLRM) can detect all outliers and cluster all positives correctly with high probability.
\begin{theorem}\label{thm1}
Given $N$ samples from a GMM with outliers, with $w_i\geq a/m, i=\overline{1,m}$ for some $a>0$ and $\sigma_{max}\leq \rho < \sqrt{0.6}$, then Algorithm \ref{alg:scRLM} (SCLRM) using $|S|=n$ subsamples has $100\%$ accuracy with probability at least  $$1-10N^2\exp\{ -p/128\}-m\exp\{-na/m\}-2m\exp\{ -p/128\}-m\exp\{-a(N-1)/m\}$$
\end{theorem}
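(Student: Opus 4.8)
The plan is to exhibit a ``good event'' $\mathcal G$ of probability at least the stated bound, on which the run of Algorithm~\ref{alg:scRLM} is completely determined and returns the exact labeling. Let $\mathcal G$ be the intersection of four events: \emph{(a)} every pairwise squared distance $\|\bx_i-\bx_j\|^2$ ($i\ne j$) is within a fixed relative window of its mean; \emph{(b)} each of the $m$ centers satisfies $0.75p<\|\bmu_k\|^2<1.25p$; \emph{(c)} every positive cluster is represented in the subsample $S$; \emph{(d)} every positive cluster has at least two representatives among all $N$ observations. The key observation for \emph{(a)} is that, once the centers are integrated out, $\bx_i-\bx_j$ is for $i\ne j$ \emph{exactly} a centered Gaussian $\N(\mathbf{0},v_{ij}I_p)$ with $v_{ij}=2\sigma_a^2$ when $\bx_i,\bx_j$ lie in the same positive cluster $a$ and $v_{ij}\ge 2$ in every other case (distinct positive clusters, positive-versus-outlier, outlier-versus-outlier), because there the variance inherits the $2I_p$ from the difference of two independent $\N(\mathbf{0},I_p)$ terms; hence $\|\bx_i-\bx_j\|^2/v_{ij}\sim\chi^2_p$. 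Requiring $0.75p<\chi^2_p<1.25p$ for each of the fewer than $N^2$ pairs, via the two-sided $\chi^2$ tail bound (whose exponents at deviation $0.25$ comfortably exceed $1/128$) and a union bound, shows that \emph{(a)} fails with probability at most $10N^2\exp(-p/128)$, and the two-sided control of the $m$ center norms adds $2m\exp(-p/128)$. For \emph{(c)}, since $w_k\ge a/m$ cluster $k$ misses $S$ with probability $\le(1-a/m)^n\le\exp(-na/m)$, hence $\le m\exp(-na/m)$ over all clusters. For \emph{(d)}, a Chernoff-type estimate on $\mathrm{Binomial}(N,w_k)$ bounds the chance that cluster $k$ has fewer than two points by $\exp(-a(N-1)/m)$ up to absolute constants, hence $\le m\exp(-a(N-1)/m)$ over all clusters. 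Summing the four failure probabilities gives the bound in the statement.

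\textbf{Geometry on $\mathcal G$.} Under Assumption~\ref{assumption1}, $\rho^2<0.6$, so $R_\rho^2=\rho^2 pF=2.5\rho^2 p<1.5p$, while $\sigma_{max}\le\rho$. Property \emph{(a)} then forces every same-cluster pair to satisfy $\|\bx_i-\bx_j\|^2<2.5\sigma_a^2 p\le 2.5\rho^2 p=R_\rho^2$, i.e.\ to lie inside the support ball of $\ell(\cdot;\rho)$, and every other pair to satisfy $\|\bx_i-\bx_j\|^2>1.5p>R_\rho^2$, i.e.\ to lie strictly outside it. Hence the ball of radius $R_\rho$ about a positive $\bx_j$ of cluster $i$ contains exactly the cluster-$i$ points, while the ball about an outlier contains only that point; this gives, for $\bx_j\in S$,
$$
L(\bx_j;\rho)=
\begin{cases}
\sum_{l(\bx_{i'})=i}\ell(\bx_{i'}-\bx_j;\rho)<-F,& l(\bx_j)=i>0,\\[2mm]
-F,& l(\bx_j)=-1,
\end{cases}
$$
the strict inequality using property \emph{(d)} to provide a second cluster-$i$ point, which contributes a negative term on top of the $\ell(\mathbf{0})=-F$ coming from $\bx_j$ itself. (It is essential here that $L$ is summed over all $N$ observations, not only over $S$.)

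\textbf{Trajectory and labels.} The loss values are computed once over the full sample and never change, so the extraction loop simply processes $S$ in increasing order of $L$. By the display above its successive minimizers are positives as long as positives remain in $S$; by property \emph{(c)} each of the $m$ clusters owns at least one such minimizer; and when a center $c=\bx_k$ with $l(\bx_k)=i$ is selected, the update removes from $S$ exactly the cluster-$i$ elements (each is within distance $R_\rho$ of $\bx_k$ by the same-cluster estimate, and nothing else is, by the other estimate). Therefore, provided the budget satisfies $T\ge m$ (without which the algorithm cannot output $m$ centers), Algorithm~\ref{alg:scRLM} returns exactly $m$ centers $c_1,\dots,c_m$ with $l(c_i)=i$, and then breaks because only outliers --- each with loss $-F$ --- remain in $S$. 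In the final pass, a positive $\bx$ of cluster $i$ has $\|\bx-c_i\|^2<2.5\sigma_i^2 p\le R_\rho^2$ and $\|\bx-c_j\|^2>R_\rho^2$ for $j\ne i$, so it is assigned label $i$; an outlier $\bx$ has $\|\bx-c_j\|^2>1.5p>R_\rho^2$ for every output center, so it is assigned label $-1$. Thus on $\mathcal G$ all $N$ labels are correct, which proves the theorem.

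\textbf{Main obstacle.} The $\chi^2$ bookkeeping is routine; the genuinely delicate point is making the concentration uniform over all $\Theta(N^2)$ pairs while the centers $\bmu_k$ are random and shared among many pairs. Integrating the centers out, as above, turns every pairwise difference into an exact scaled $\chi^2_p$ and removes the dependence, since a union bound only needs marginal control; the alternative --- conditioning on the $\bmu_k$'s and then controlling their norms (the source of the $2m\exp(-p/128)$ summand) and their mutual inner products --- is presumably what yields the precise form of the stated probability. A secondary subtlety is the data-dependence of the selected center in the analysis of property \emph{(d)}, which is why that term is written with $N-1$ rather than $N$.
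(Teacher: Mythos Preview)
Your argument is correct and mirrors the paper's: both integrate out the random centers so that every $\bx_i-\bx_j$ is marginally a centered isotropic Gaussian (hence $\|\bx_i-\bx_j\|^2$ is a scaled $\chi^2_p$), apply the sub-exponential tail at deviation $1/4$, and union-bound over pairs; the deduction of the algorithm's trajectory from the resulting distance dichotomy is exactly as in the paper. Two small corrections to your bookkeeping. First, your event (b) is never invoked in your own geometry or trajectory analysis and can be dropped; contrary to your speculation, the paper does \emph{not} obtain the $2m\exp(-p/128)$ summand from center norms but from Proposition~3, which controls only one within-cluster pair per cluster (the $10N^2$ summand comes from Propositions~1 and~2, which together play the role of your (a) but are split into three and two separate $2N^2$-type union bounds). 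Second, your event (d) as written does not quite deliver the stated constant: $\mathbb P(\mathrm{Bin}(N,w_k)\le 1)=(1-w_k)^{N-1}\bigl(1+(N-1)w_k\bigr)$, which is strictly larger than $e^{-(N-1)w_k}$, so ``up to absolute constants'' is doing real work. The paper avoids this by fixing an index $j$ with $l(\bx_j)=k$ and bounding the probability that none of the \emph{other} $N-1$ observations has label $k$, which is exactly $(1-w_k)^{N-1}\le e^{-(N-1)w_k}$ --- this is precisely the $N-1$ you flagged as a subtlety.
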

The proof of this theorem is given in Appendix \ref{sec:D}.

Based on Theorem \ref{thm1}, we have the following Corollary \ref{cor7} on the theoretical bounds for parameters $p$, $n$ and $N$.
\begin{corollary}\label{cor7}
Given $N$ samples from a GMM with outliers, with $w_i\geq a/m, i=\overline{1,m}$ for some $a>0$ and $\sigma_{max}\leq \rho < \sqrt{0.6}$, for any $\delta>0$,  if 
\[
p >128  (2\log {N} + \log \frac{40}{\delta}),
\]
\[
p >128  (\log m +\log \frac{8}{\delta}),
\]
\[
n>\frac{m}{a}(\log m+\log \frac{4}{\delta})
,\]
\[
N>\frac{m}{a}(\log m+\log \frac{4}{\delta})+1,
\]
then Algorithm \ref{alg:scRLM} (SCRLM) using $|S|=n$ subsamples will have $100\%$ accuracy with probability at least  $1-\delta$. 
\end{corollary}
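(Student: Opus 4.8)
The plan is to derive the corollary directly from Theorem \ref{thm1} by splitting the target failure probability $\delta$ into four equal pieces, one for each error term in the bound
$$10N^2\exp\{-p/128\}+m\exp\{-na/m\}+2m\exp\{-p/128\}+m\exp\{-a(N-1)/m\},$$
and then requiring each term to be at most $\delta/4$. Since the parameters enter the four terms in an essentially decoupled way ($p$ controls the first and third terms, $n$ the second, $N$ the fourth), this reduces to four elementary one-variable inequalities, and summing them shows the total failure probability is at most $\delta$, i.e. success holds with probability at least $1-\delta$.

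First I would dispatch the two terms containing $\exp\{-p/128\}$. Imposing $10N^2\exp\{-p/128\}\le\delta/4$ is equivalent, after taking logarithms, to $p\ge 128\log(40N^2/\delta)=128(2\log N+\log(40/\delta))$, which is exactly the first hypothesis; likewise $2m\exp\{-p/128\}\le\delta/4$ is equivalent to $p\ge 128\log(8m/\delta)=128(\log m+\log(8/\delta))$, the second hypothesis. Next, $m\exp\{-na/m\}\le\delta/4$ rearranges to $n\ge\frac{m}{a}\log(4m/\delta)=\frac{m}{a}(\log m+\log(4/\delta))$, the third hypothesis, and $m\exp\{-a(N-1)/m\}\le\delta/4$ rearranges to $N-1\ge\frac{m}{a}(\log m+\log(4/\delta))$, which is the fourth hypothesis. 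Because the hypotheses are stated as strict inequalities, each of the four error terms is in fact strictly below $\delta/4$, so their sum is strictly below $\delta$, and Theorem \ref{thm1} then gives $100\%$ accuracy with probability at least $1-\delta$.

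There is no substantive obstacle in this argument: it is just the standard bookkeeping of inverting exponential tail bounds after allocating the failure budget. The only genuine choice is how to distribute $\delta$ among the four terms; allocating $\delta/4$ to each is what reproduces the precise constants ($40$, $8$, $4$, $4$) appearing in the four hypotheses, so I would present that allocation. Any other split would yield a corollary with different numerical constants but the same structure, so I would remark on this only briefly and not optimize the constants further.
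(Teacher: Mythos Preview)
Your proposal is correct and follows essentially the same route as the paper's proof: allocate $\delta/4$ to each of the four error terms in Theorem~\ref{thm1}, then invert each exponential bound to recover the four hypotheses on $p$, $n$, and $N$. The paper's argument is identical in structure and constants, so there is nothing to add.
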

The proof of this corollary is given in Appendix \ref{sec:D}.

\subsection{Computational Complexity}
\label{sec:time complexity}
Computing $L(\bx_j;\rho),j\in S$ (Step 4 of Algorithm \ref{alg:scRLM}) is $O(nNp)$.
Each iteration of steps 6-12 is $O(np)$, so steps 5-13 take $O(nmp)$.
Similarly, steps 14-21 take $O(Nmp)$.
Therefore, the computation complexity of Algorithm \ref{alg:scRLM} is $O(nNp+nmp+Nmp)=O(nNp+Nmp)$. 
From Corollary \ref{cor7} one could see that the subsample size $n$ should be chosen on the order of $O(m\log m)$. 
Therefore, the computational complexity of Algorithm \ref{alg:scRLM} is $O(mpN\log m)$, it is linear in the dimension $p$ and the number of observations $N$ and log-linear in the number of clusters $m$.

\section{Experiments}
\label{sec:exp}

This section presents an empirical evaluation of the performance of SCLRM using synthetic data and real datasets from computer vision. 
First, the tightness of the parameter bounds given in the theoretical guarantees are evaluated using synthetic data. 
Then, the effectiveness of SCRLM in real applications is evaluated using five real image datasets.

To compare the performances of SCRLM on synthetic and real datasets, two evaluation measures are defined for a true labeling vector $\bl\in \ZZ^N$ and an obtained labeling vector $\hat{\bl}\in \ZZ^N$: 
\begin{enumerate}
    \item $\operatorname{accuracy}(\bl, \hat{\bl})= \frac{1}{N} \max _{\pi \in P} 
    |\pi(\hat{\bl}) \cap \bl
    |$
    \item $\operatorname{purity}(\bl, \hat{\bl})=\frac{1}{N} \sum_{i=1}^{T} \max _{j}   |\hat{\bl}^{-1}(i) \cap \bl^{-1}(j) |$
\end{enumerate}
where $P$ is the set of all permutations of $\{1,...,m\}$, and 
$\bl^{-1}(j)=\{i, \bl_i=j\}$. The accuracy is computed in polynomial time using the Hungarian algorithm.

 
In order to assess the effectiveness of SCLRM, its performance is compared with the following clustering methods: $k$-means++ \citep{arthur2006k}, Complete Linkage Clustering  (CL) \citep{johnson1967hierarchical}, Spectral Clustering (SC) \citep{ng2002spectral}, Tensor Decomposition (TD) \citep{hsu2013learning}, Expectation Maximization (EM) \citep{dempster1977maximum} and t-Distributed Stochastic Neighbor Embedding (t-SNE) \citep{van2008visualizing}.

For consistency in comparing the accuracy and running time, experiments use an implementations of SCLRM and the state-of-the-art algorithms in MATLAB. 
For $k$-means++, the built in function \texttt{kmeans} that implements $k$-means++ is used. 
The built-in function \texttt{linkage} and \texttt{spectralcluster} are used for CL  and SC respectively. 
In terms of EM, a standard EM for GMM is used. 
In terms of TD, Theorem 2 from \citep{hsu2013learning} has been implemented in MATLAB. 
For t-SNE+$k$-means++, the built-in \texttt{tsne} function is used to generate a matrix of two-dimensional embeddings followed by an application of $k$-means++ to obtain the final results. 
For SCRLM+$k$-means, $k$-means is applied using the initial centers obtained from SCRLM. 

\subsection{Simulation Experiments}

This section shows experiments on synthetic data generated from a Gaussian mixture model with outliers described in Equation \ref{equ:GMM}.

\subsubsection{Comparison of Observed and Theoretical Accuracy}
\label{subsec:Theoretical bound comparison}

This section evaluates the tightness of the theoretical bounds for Algorithm \ref{alg:scRLM}. 
For simplicity, data is generated without outliers. 
The minimum and maximum weights for the positive clusters are taken to be $0.8/m$ and $1.2/m$ respectively. 
The standard deviations $\sigma_i$ of positive clusters are linearly increasing with $i$ from $1/16$ to $1/4$. 
The experiments use $\rho=0.5$.

The regions for different parameter combinations where the theoretical bound guarantees of achieving $100\%$ accuracy with at least $99\%$ probability are compared with similar regions obtained experimentally. 
The theoretical regions are described below on a case-by-case basis. 
The experimental regions are obtained by running  Algorithm \ref{alg:scRLM} with different parameter combinations on an exponential grid. 
For each parameter combination the algorithm is run 100 times and the number of times the algorithm has $100\%$ accuracy is recorded. 
The area where at least 99 of the 100 runs had $100\%$ accuracy is shown in light gray in Figure \ref{fig: Experimental Guarantee vs Theoretical Guarantee}.

Figure \ref{fig: Experimental Guarantee vs Theoretical Guarantee} a) displays the results for the data dimension $p$ vs. the sample size $N$, keeping the number of clusters $m$ fixed to $m=3$ and the subsample size $n=\lceil \frac{m}{a}(\log m+\log \frac{4}{\delta})\rceil$. 
According to Corollary \ref{cor7}, the theoretical $p$ in this case should be at least
\[
p > \lceil 128  (2\log {N} + \log \frac{40}{\delta}) \rceil
\] when $\delta=0.01$. 
This area is shown in dark gray in Figure \ref{fig: Experimental Guarantee vs Theoretical Guarantee} a).
From the plot, one could see that the theoretical bound on $p$ is not very tight, since there is a large gap, by a factor of over 64 between the dark region (theoretical) and the light gray region (experimental).
\begin{figure}[t]
\centering
\begin{tabular}{cc}
 \includegraphics[width=0.45\linewidth]{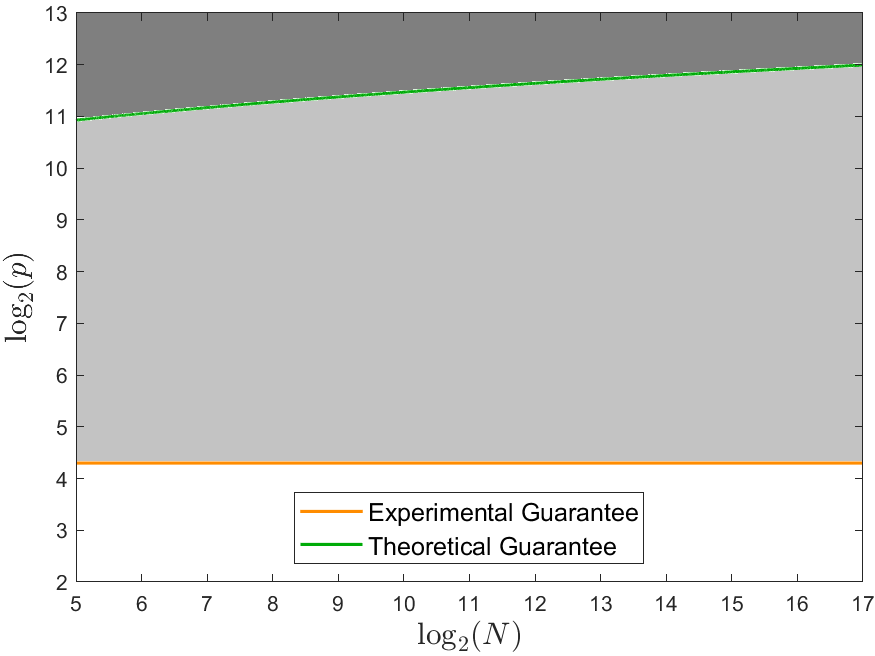}    
 &\includegraphics[width=0.45\linewidth]{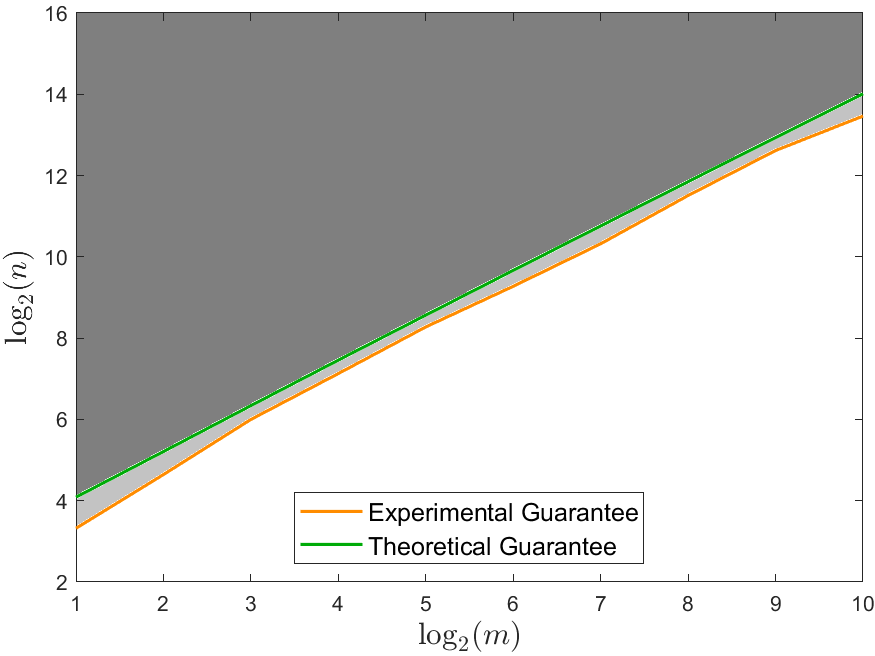}  \\
 a) $p$ vs $N  (m=3)$&b)  $n$ vs $m$  $(N=20,000, p=3,700)$\\
 \includegraphics[width=0.45\linewidth]{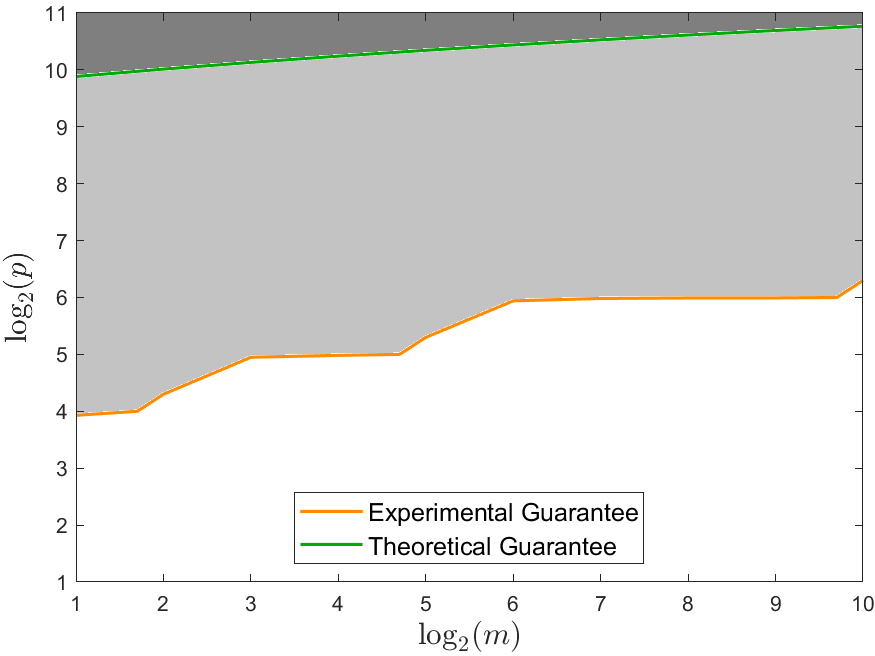}    
 & \includegraphics[width=0.45\linewidth]{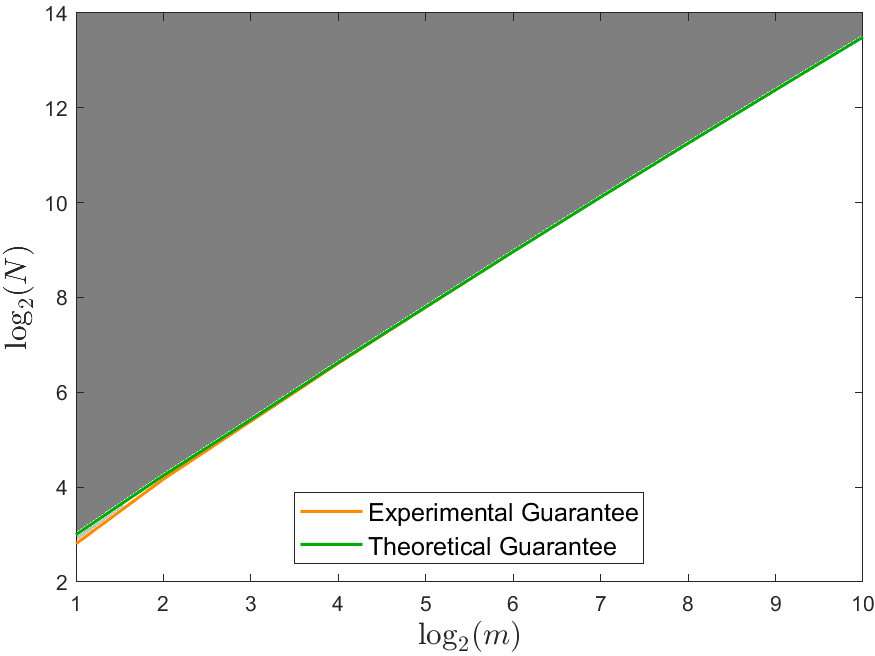}\\
c) $p$ vs $m$ $(N=20,000)$ &d) $N$ vs $m$ $(p=3,700)$
\end{tabular}
\vskip -3mm
\caption{Comparison between the parameter combinations where the SCRLM algorithm is theoretically guaranteed to have 100\% accuracy for 99\% of the time with the experimental findings.\label{fig: Experimental Guarantee vs Theoretical Guarantee}} 
\end{figure}

Figure \ref{fig: Experimental Guarantee vs Theoretical Guarantee} b) displays the results for the subsample size $n$ vs. the number of clusters $m$, when the sample size is $N=20,000$ and $p=3700$. 
According to Corollary \ref{cor7}, the theoretical $n$ is at least \[
n > \lceil \frac{m}{a}(\log m+\log \frac{4}{\delta})\rceil.
\]
The plot indicates that the theoretical bound for $n$ is tight, by a factor around 1.2.

Figure \ref{fig: Experimental Guarantee vs Theoretical Guarantee} c) displays the results for the data dimension $p$ vs. the number of clusters $m$, when $N$ is fixed to be $N=20,000$ and $n=\lceil \frac{m}{a}(\log m+\log \frac{4}{\delta})\rceil$.
According to Corollary \ref{cor7}, the theoretical $p$ should be at least
\[
p >\lceil128  (\log m +\log \frac{8}{\delta})\rceil
\] when $\delta=0.01$. 
The plot indicates that the bound on $p$ in not very tight, off by a factor over 32.

Figure \ref{fig: Experimental Guarantee vs Theoretical Guarantee} d) displays the results for the sample size $N$ vs. the number of clusters $m$, when $p=3700$ and $n=\lceil \frac{m}{a}(\log m+\log \frac{4}{\delta})\rceil
$.
According to Corollary \ref{cor7}, the theoretical $N$ is at least \[
N > \lceil \frac{m}{a}(\log m+\log \frac{4}{\delta})+1\rceil.
\]
Since the smallest $N$ one can pick is $n$, that explains why the theoretical bound almost overlaps the experimental bound in this case.

The empirical results support the conclusions that the theoretical bound for $p$ is conservative and accurate results are obtained with smaller values of $p$ in practice, but the theoretical bounds for $N$ and $n$ are in good agreement with values needed in practice.

\subsubsection{Stability of SCRLM w.r.t. the Bandwidth Parameter}
\label{subsec: Stability of SCRLM w.r.t. the bandwidth parameter rho}

\begin{figure}[t]
\centering
\begin{tabular}{cc}
 \includegraphics[width=0.45\linewidth]{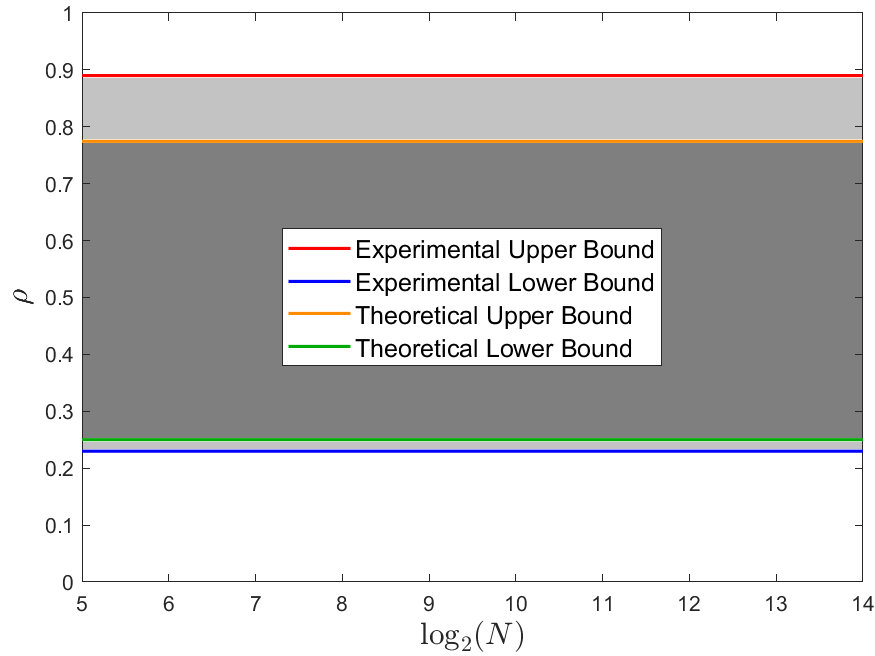}    
 &\includegraphics[width=0.45\linewidth]{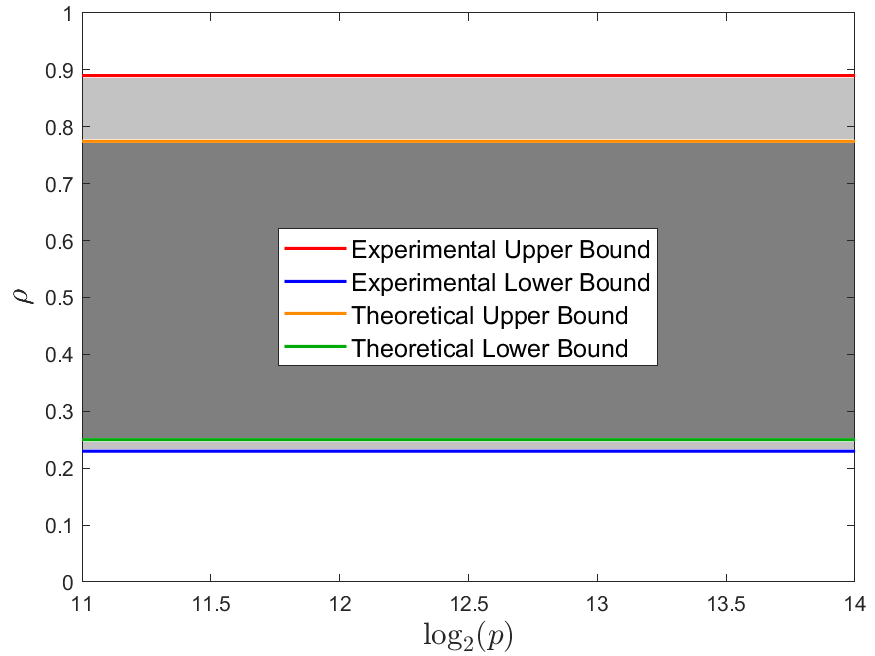}  \\
 a) $\rho$ vs $N$  $(m=3,p=3700)$ &b)  $\rho$ vs $p$  $(N=32, m=3)$\\
 \includegraphics[width=0.45\linewidth]{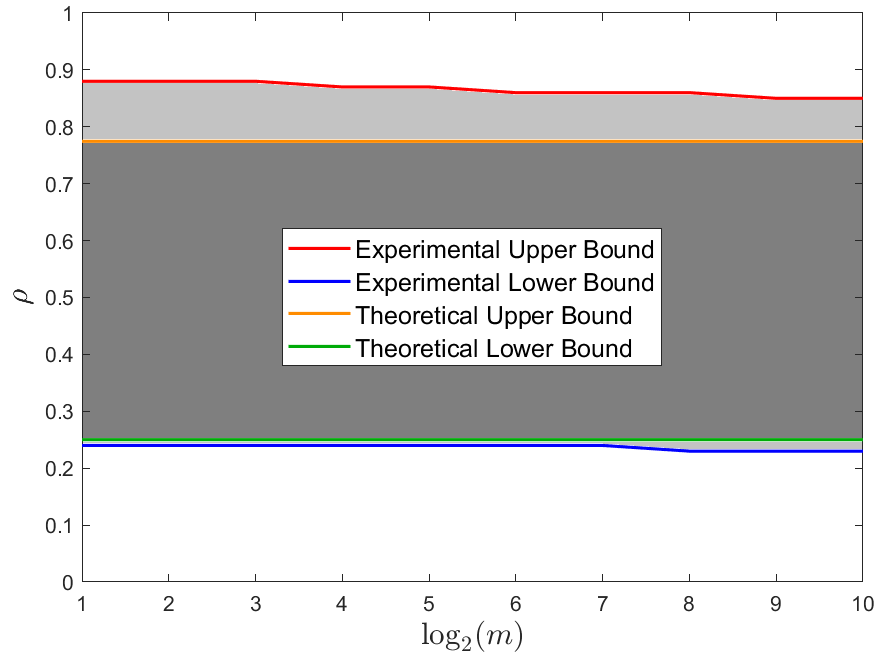}    
 & \includegraphics[width=0.45\linewidth]{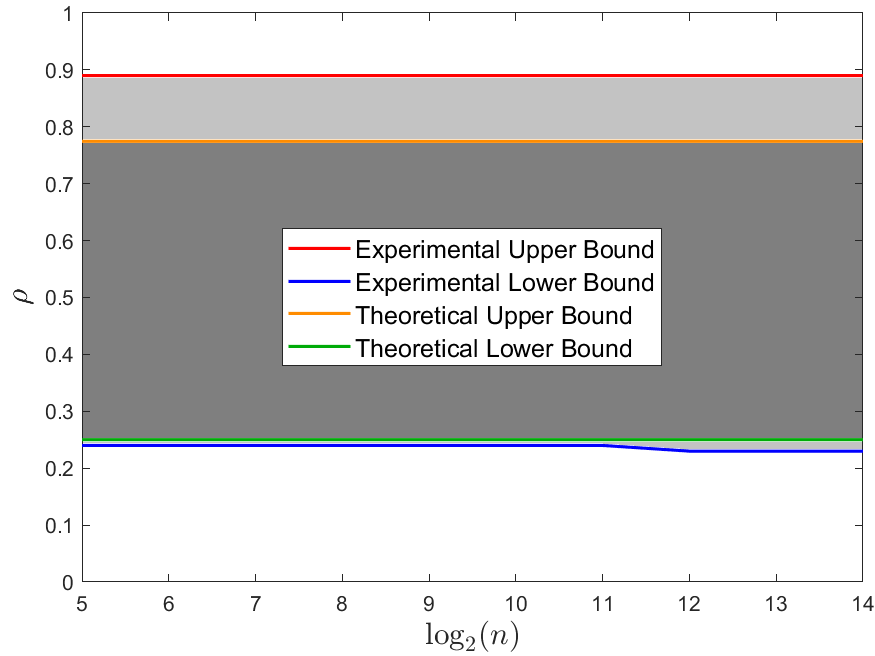}\\
c) $\rho$ vs $m$ $(N=20,000, p=3700)$  &d) $\rho$ vs $n$ $(N=20,000, p =3700, m=3)$
\end{tabular}
\vskip -3mm
\caption{Evaluation of tightness of the bandwidth parameter $\rho$.\label{fig: Evaluation of tightness of rho}} 
\end{figure}

This following experiments evaluate the tightness of the theoretical bounds of $\rho$ for Algorithm \ref{alg:scRLM}. The experiments use $\sigma_{max}=0.25$.

Figure \ref{fig: Evaluation of tightness of rho} a) displays the results for the bandwidth parameter $\rho$ vs. the sample size $N$, keeping the number of clusters $m$ fixed to $m=3$ and the subsample size $n=\lceil \frac{m}{a}(\log m+\log \frac{4}{\delta})\rceil$. 

Figure \ref{fig: Evaluation of tightness of rho} b) displays the results for the bandwidth parameter $\rho$ vs. the data dimension $p$, when $N=32$, $m=3$ and $n=\lceil \frac{m}{a}(\log m+\log \frac{4}{\delta})\rceil$. 

Figure \ref{fig: Evaluation of tightness of rho} c) displays the results for the the bandwidth parameter $\rho$ vs. the number of clusters $m$, when $N$ is fixed to be $N=20000$, $p$ is fixed to be $3700$ and $n=\lceil \frac{m}{a}(\log m+\log \frac{4}{\delta})\rceil$.

Figure \ref{fig: Evaluation of tightness of rho} d) displays the results for the bandwidth parameter $\rho$ vs. the number of subsamples $n$, when $N=20000$, $p=4200$ and $m=3$.

According to Assumption \ref{assumption1}, for all of the experiments, the theoretical upper bound of $\rho$ is $\sqrt{0.6}$, and the theoretical lower bound of $\rho$ is $\sigma_{max}=0.25$. 
From Figure \ref{fig: Evaluation of tightness of rho}, one could see that the theoretical upper bound on $\rho$ is not very tight with a difference of more than 0.1, but the theoretical lower bound on $\rho$ is very tight with the difference less than 0.02.

The empirical results support the conclusions that the theoretical upper bound for $\rho$ is not tight, that 100\% accuracy can be achieved with $\rho>\sigma_{max}$ in practice, but the theoretical lower bounds for $\rho$ are in good agreement with values needed in practice.

\begin{figure}[t]
\centering
\begin{tabular}{ccc}
 \includegraphics[width=0.3\linewidth]{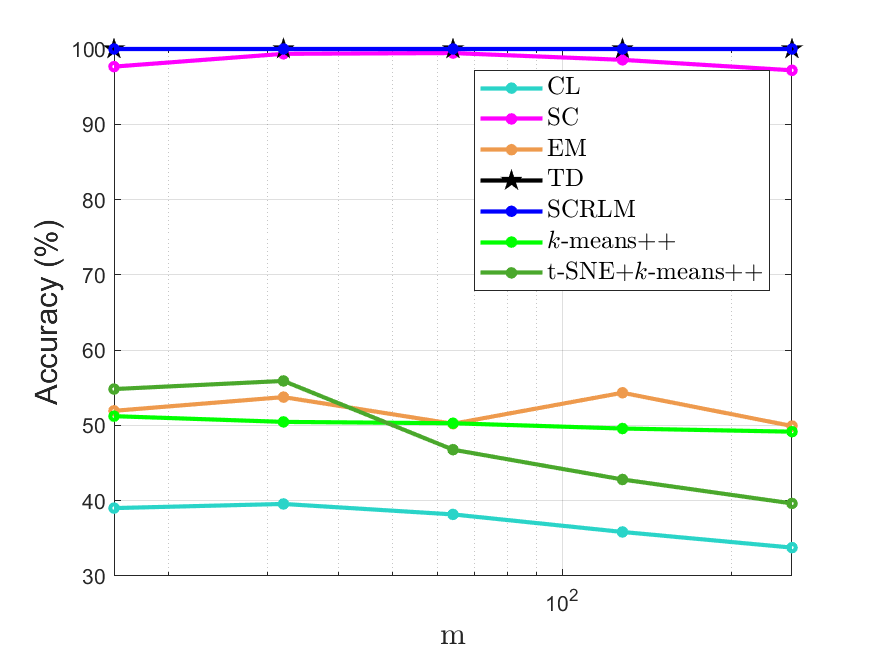}    
 &\includegraphics[width=0.3\linewidth]{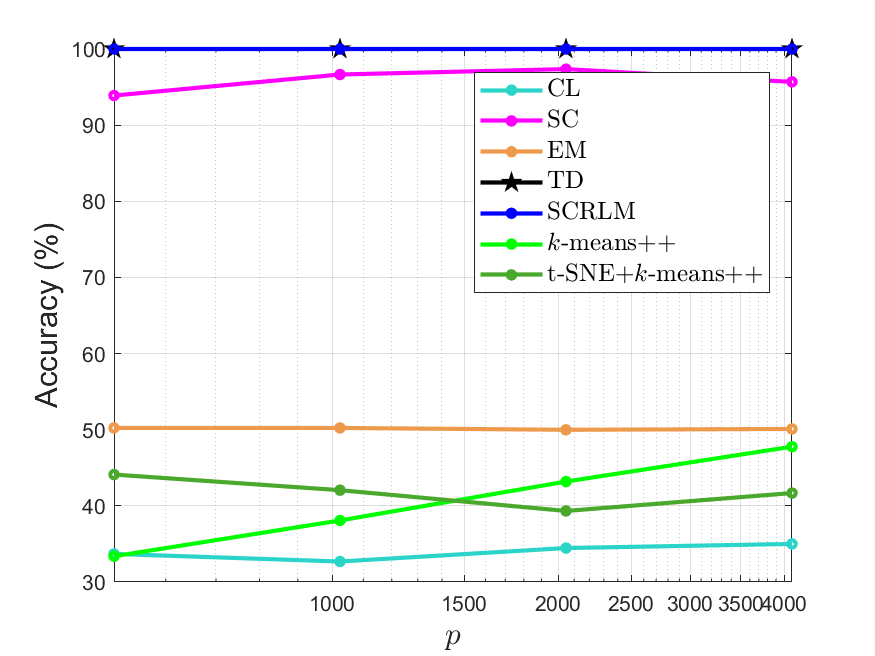}
  &\includegraphics[width=0.3\linewidth]{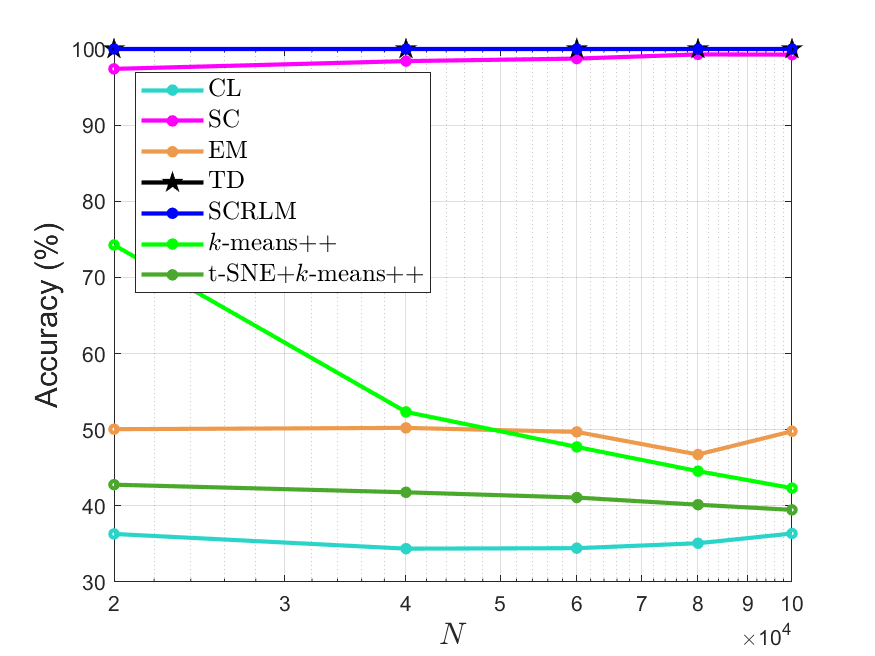}\\
 a) Accuracy vs $m$ &b)  Accuracy vs $p$ & c) Accuracy vs $N$\\
\end{tabular}
\vskip -3mm
\caption{Accuracy of clustering algorithms on simulation data. \label{fig:Acc_in_GMM}}
\end{figure}

\subsubsection{Comparison with other clustering methods}

For these simulations, the data is generated with different number of clusters ($m$), different dimension ($p$) and different number of observations ($N$). 
The data is generated to contain $50\%$ positives and $50\%$ negatives (outliers). The number of desired clusters is  specified as $m+1$ for the other methods evaluated besides SCRLM. 
For SCRLM, the number of desired clusters $T$ was selected to be $T=N$ and thus the actual number of clusters was found automatically.
From Figure \ref{fig:Acc_in_GMM}, one could see that only SCRLM, SC and TD are able to detect outliers, the other methods are very sensitive to outliers. 
SCRLM and TD achieve $100\%$ accuracy in all cases.

\subsection{Real Data Experiments}

To show that the SCRLM is an effective method, it was applied to four real datasets: the  MNIST \citep{deng2012mnist}, CIFAR-10 \citep{krizhevsky2009learning}, CIFAR-100 \citep{krizhevsky2009learning} and ImageNet ILSVRC-2012 dataset \citep{russakovsky2015imagenet}.

MNIST \citep{deng2012mnist} has 70,000 images of handwritten digits from 0 to 9 with 60,000 images used for training and 10,000 images used for testing. 
CIFAR-10 \citep{krizhevsky2009learning} consists of 60000 images in 10 classes, with 6000 images per class. 
There are 50000 training images and 10000 test images. 
CIFAR-100 \citep{krizhevsky2009learning} is just like the CIFAR-10, except it has 100 classes containing 600 images each. 
The ImageNet \citep{russakovsky2015imagenet} validation dataset has 50000 observations on 1000 classes with 50 observation per class and the ImageNet training dataset has almost 1.3 million observations on 1000 classes. 

\begin{figure}[t]
	\centering	\includegraphics[width=0.8\linewidth]{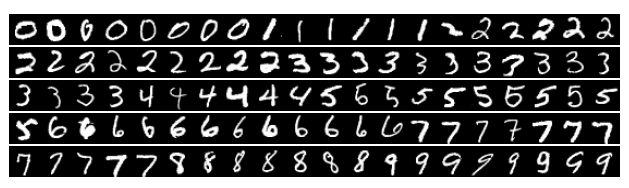} 
\vskip -3mm
\caption{Variability of MNIST, cluster centers obtained by SCRLM ($T = 100$). \label{fig:Variability_MNIST} }
\end{figure} 

\textbf{Data preprocessing.} Feature extraction for image data obtains a compact feature vector from the interesting parts of an image. 
The model SimCLR \citep{chen2020simple} was used to obtain a version of the MNIST dataset as real vectors with dimension $p=512$. 
The images from the CIFAR-10 and CIFAR-100 were resized to $144 \times 144$ pixels, then a pre-trained CNN, CLIP ResNet$50 \times 64$  \citep{radford2021learning} with average pooling was used to obtain a $p=4096$ dimensional feature vector for each image. 
The images from the ImageNet were resized to $224 \times 224$ pixels, then a $p=640$
dimensional feature vector for each image was obtained using CLIP ResNet$50 \times 4$ \citep{radford2021learning} and attention pooling.

\textbf{Results.} Figure \ref{fig:Variability_MNIST} shows the cluster centers obtained by SCRLM when the number of desired clusters $T$ is set to be 100 in MNIST. 
One could see that each cluster center is a good representation of that cluster. 
The variations of simple digits like 1 and 4 are relatively small, while complex digits like 2 and 3 have more variations. 
This shows MNIST is likely to have a hierarchical structure that can be used to cluster data when the number of clusters has a range of values.
\begin{figure}[t]
\centering
\begin{tabular}{cccc}
 \includegraphics[width=0.45\linewidth]{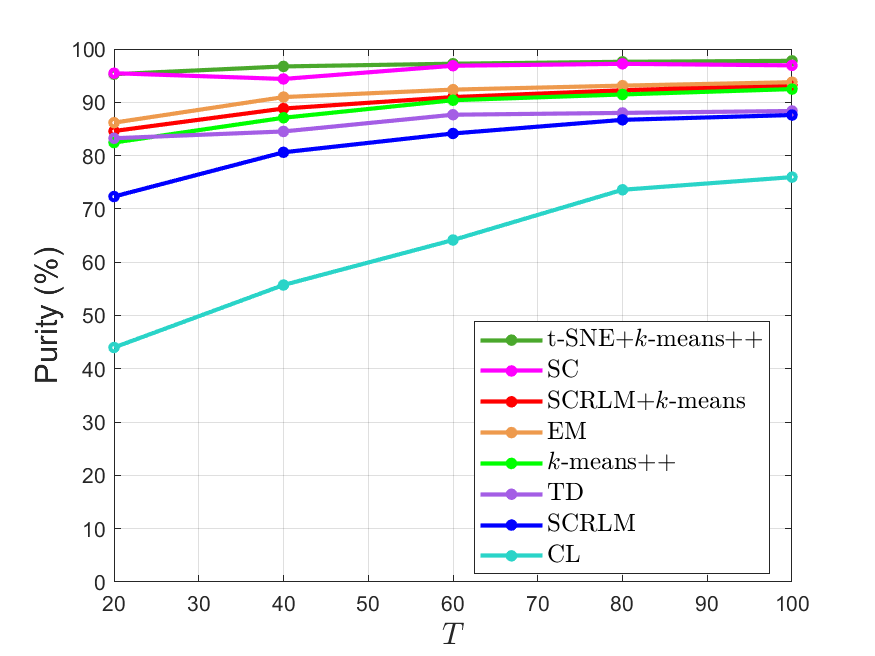}    
 &\includegraphics[width=0.45\linewidth]{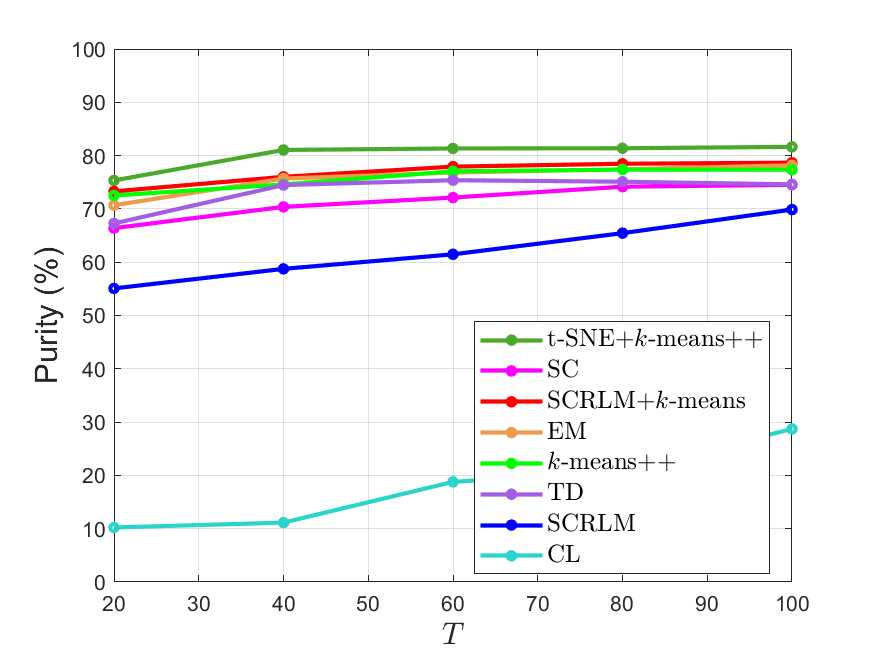}\vspace{-3mm}\\
 a) MNIST & b)  CIFAR-10 \\
\end{tabular}
\vskip -4mm
\caption{Purity vs number of clusters $T$ of clustering algorithms on MNIST and CIFAR-10. \label{fig:Acc_vs_T}}
\vspace{-4mm}
\end{figure}

\begin{figure}[t]
\vspace{-4mm}
\centering
\begin{tabular}{cccc}
 \includegraphics[width=0.45\linewidth]{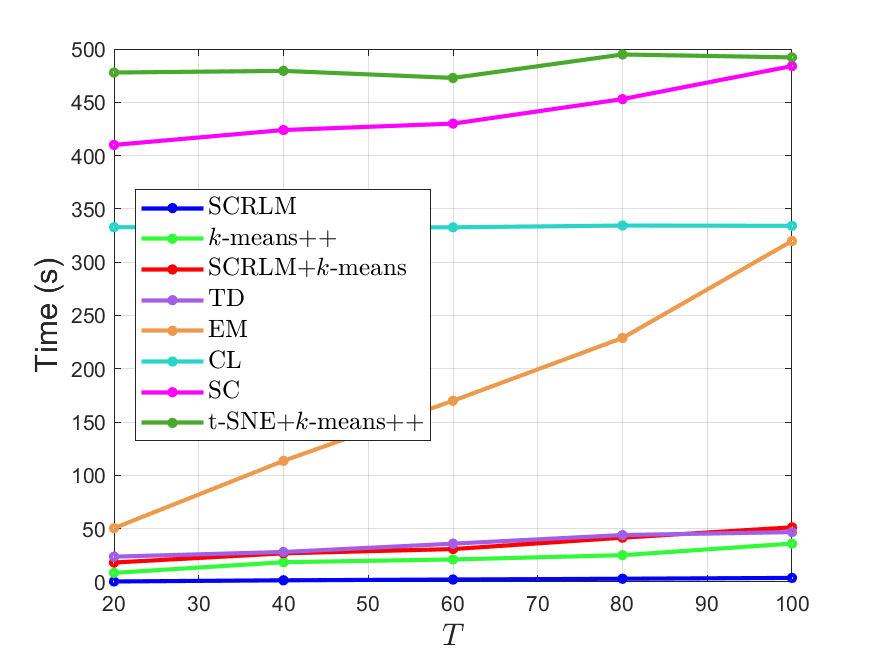}    
 &\includegraphics[width=0.45\linewidth]{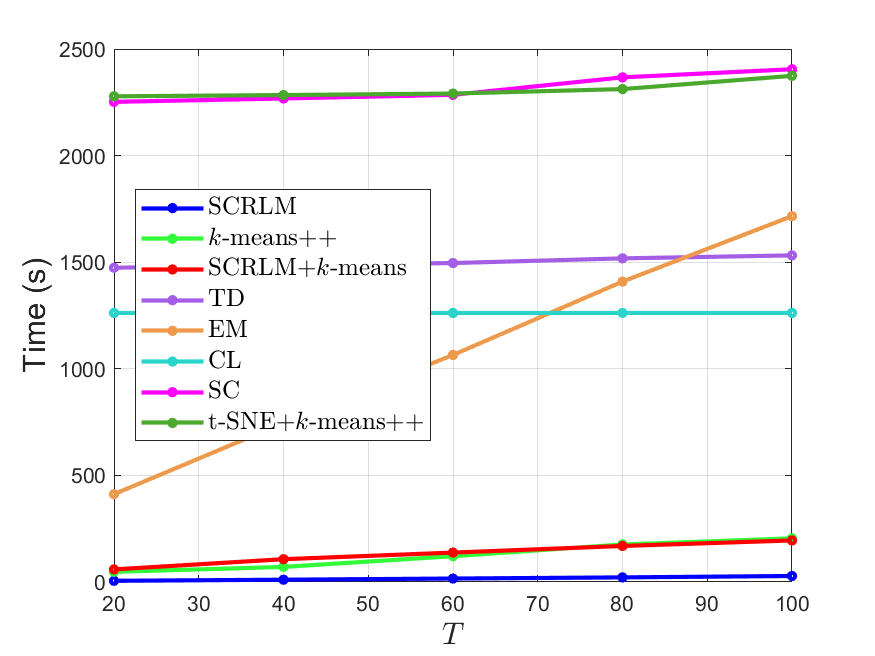}\vspace{-3mm}\\
 a) MNIST & b)  CIFAR-10 \\
\end{tabular}
\vskip -4mm
\caption{Computation time vs number of clusters $T$ of clustering algorithms on MNIST and CIFAR-10. \label{fig:Time_vs_T}}
\vspace{-5mm}
\end{figure}

Figure \ref{fig:Acc_vs_T} and \ref{fig:Time_vs_T} support the conclusion that the SCLRM-based methods are superior to other methods for problems with a large number of clusters. 
From the plot, one could see that the purity of SCRLM and SCRLM+$k$-means increases as the number of clusters increases. 
However, the purity of TD does not have an obvious increase as the number of clusters increases, and the running time of EM increases significantly as the number of clusters increases. 
Therefore, SCRLM+$k$-means is the most efficient in producing a particular level of accuracy within a particular time.

\begin{figure}[t]
\centering
\begin{tabular}{cccc}
\hspace{-4mm} \includegraphics[width=0.25\linewidth]{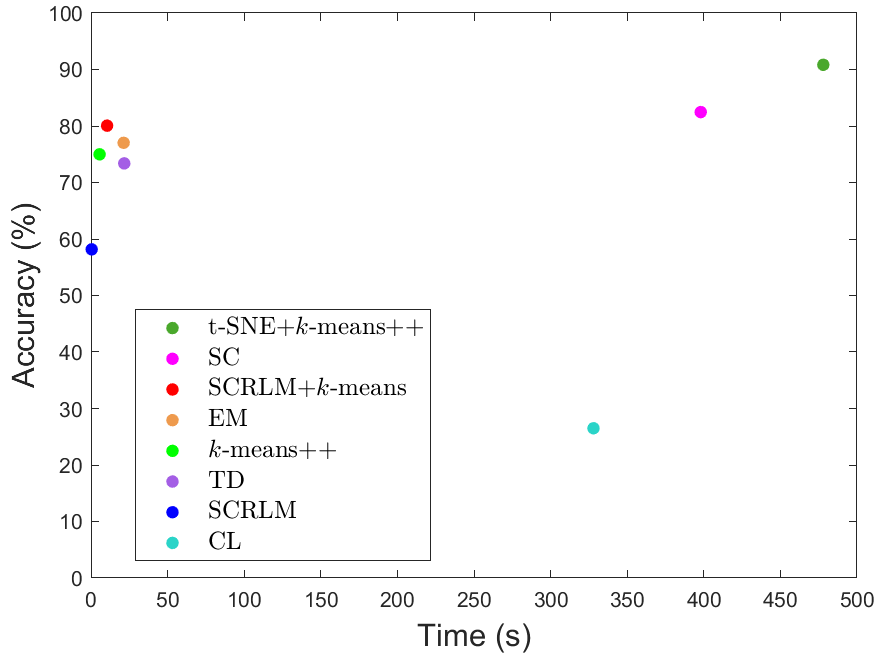}    
 &\hspace{-4mm}\includegraphics[width=0.25\linewidth]{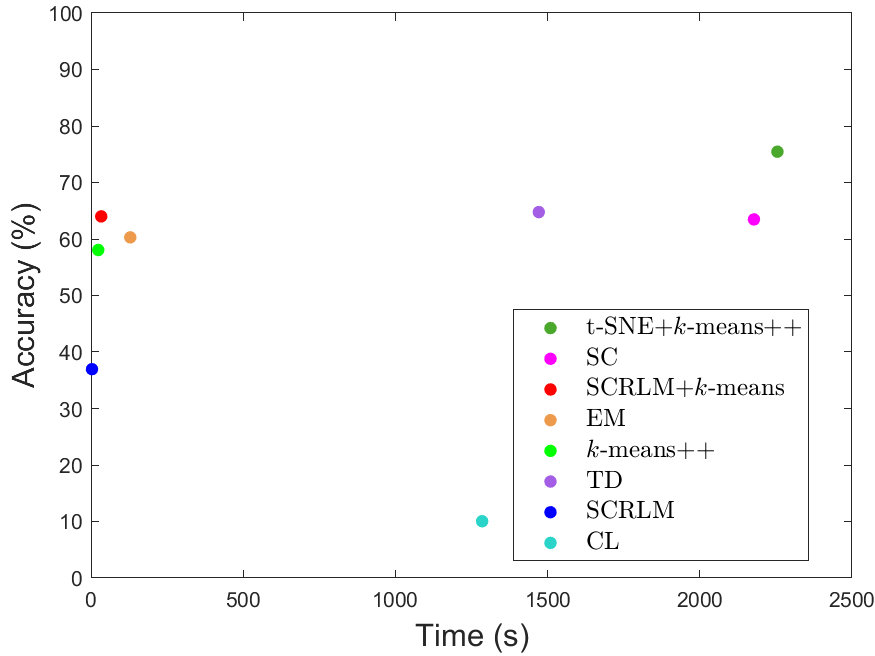} & 
 \hspace{-4mm}\includegraphics[width=0.25\linewidth]{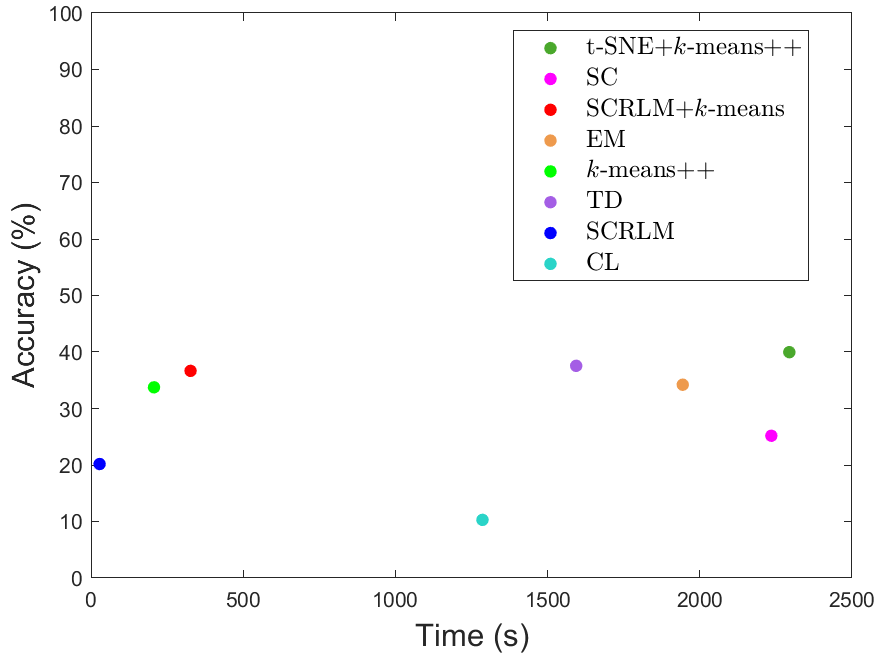} &
  \hspace{-4mm}\includegraphics[width=0.25\linewidth]{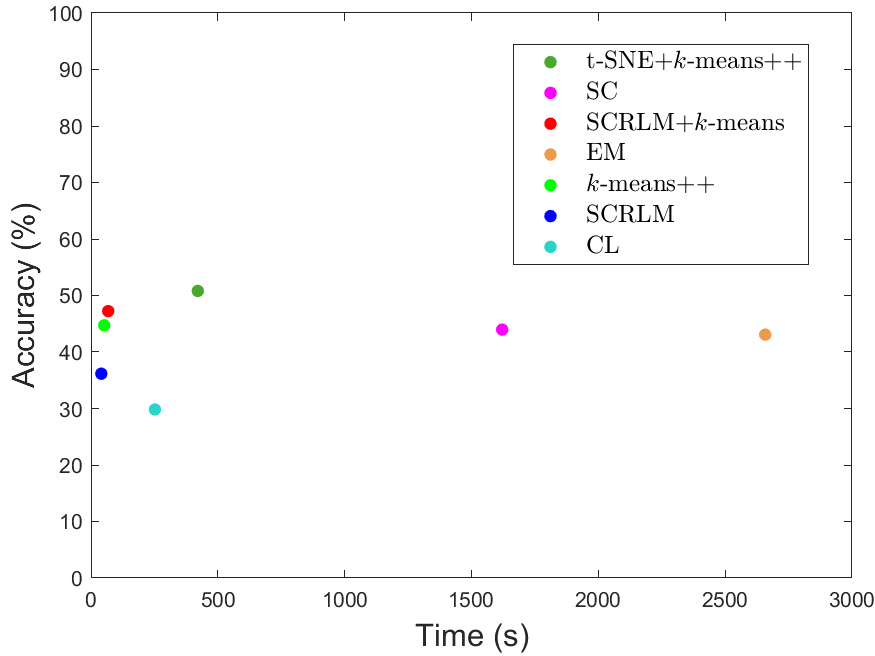} \\
 a)  MNIST & b)  CIFAR-10 & c)  CIFAR-100 & d) ImageNet Val\\
\end{tabular}
\vskip -3mm
\caption{Accuracy vs time of different clustering algorithms on four image datasets. \label{fig:Acc_vs_Time}}
\end{figure}
The comparison of accuracy and time is shown in Figure \ref{fig:Acc_vs_Time} and summarized in Tables \ref{table:comparison of accuracy} and \ref{table:comparison of time}. 
In all the cases, SCRLM outperforms all other methods in terms of running time. EM performs well when the number of clusters is small but has prohibitive computation cost for CIFAR-100 and ImageNet validation datasets. 
t-SNE and TD achieve the best accuracy but only have acceptable running time when the dimension is small. 
Therefore, only SCRLM, SCLRM+$k$-means and $k$-means++ are compared for the ImageNet training dataset. 
From Tables \ref{table:comparison of accuracy} and \ref{table:comparison of time} one could see that SCLRM+$k$-means achieves a higher accuracy on ImageNet than $k$-means++ in far less time, by a factor of 3.83. 
This demonstrates that SCRLM can be used as an initialization technique for $k$-means clustering that has a better performance than $k$-means++.

\begin{table}[!ht]
\centering
\begin{tabular}{|c|c|c|c|c|c|}
\hline
Accuracy(\%)    & MNIST & CIFAR-10&CIFAR-100&ImageNet val & ImageNet \\ \hline
CL  & 26.50  & 10.05  & 10.29   & 29.83   & - \\ \hline
SC  & 82.46  & 63.47  & 25.17   & 43.96 & -       \\ \hline
EM  & 77.03  & 60.29  & 34.21  & 43.07 & -    \\ \hline
TD  & 73.38  & 64.76  & 37.55  & - & -     \\ \hline
t-SNE+$k$-means++  & 90.83 & 75.45  & 39.97 & 50.81 & -  \\ \hline
$k$-means++  & 74.99   & 58.06 & 33.75  &  44.73 &  47.71    \\ \hline
SCRLM  & 58.17 &  36.96 &  20.17   & 36.16  &    34.01   \\ \hline
SCRLM+$k$-means & 80.06 &  64.00 &  36.66 & 47.24 &  48.61    \\ \hline
\end{tabular}
\caption{Accuracy of clustering algorithms on five image datasets. \label{table:comparison of accuracy}}
\end{table}

\begin{table}[!ht]
\centering
\begin{tabular}{|c|c|c|c|c|c|}
\hline
Time(s)    & MNIST  & CIFAR-10& CIFAR-100& ImageNet val & ImageNet \\ \hline
CL  & 328   &  1,285   & 1,286 &252  & -  \\ \hline
SC   & 398    & 2,178 &2,235   & 1,621    & -    \\ \hline
EM   & 21.3    &129 & 1,944&2,658    & -        \\ \hline
TD   & 21.7   &  1,471  &1,594 & -   & -    \\ \hline
t-SNE+$k$-means++  & 478  & 2,255  & 2,294  & 421   & -  \\ \hline
$k$-means++& 5.61  &23.8   & 207 &52.1 &  10,005    \\ \hline
SCRLM  & 0.46    &3.25  & 28.2 &40.6   &1,269       \\ \hline
SCRLM+$k$-means & 10.5   & 33.6  &327 & 67.8 & 2,610\\ \hline
\end{tabular}
\caption{Computation time of clustering algorithms on five image datasets. \label{table:comparison of time}}
\end{table}

\section{Conclusion}
\label{sec:conc}
In this paper, a novel algorithm named SCRLM is proposed for clustering large scale Gaussian mixture models with outliers. 
The basic assumptions of the algorithm are: isotropic Gaussians for the foreground (positives) clusters, and a constraint on the range of values of the bandwidth parameter $\rho$ of the loss function. 
Unlike most clustering methods, the algorithm has strong theoretical guarantees that, with high probability, it is able to detect all the outliers and cluster all the observations correctly. 
Theoretical and numerical results confirm that SCRLM is an effective clustering method when the number of clusters and dimension are large. 
Moreover, it can be used as an initialization strategy for $k$-means clustering and was observed to have better performance than other centroid initialization methods in extensive experiments. 

There are still some drawbacks of SCRLM that must be overcome with additional work in the future. 
First, the clustering results of SCRLM depend strongly on the bandwidth parameter $\rho$ in the loss function. 
Its value is currently determined by trial and error. 
Second, it was observed that the clustering results of SCRLM for large numbers of clusters are more satisfactory than for small numbers of clusters. 
Hence, the future work will focus on two aspects. 
First, strategies for determining an effective value of $\rho$ based on the distribution assumptions and the given data will be explored. 
Second, a hierarchical clustering method based on SCLRM that is able to handle a large numbers of clusters, on the order of tens of thousands to millions will be designed and evaluated.
\bigskip
\begin{center}
{\large\bf SUPPLEMENTARY MATERIALS}
\end{center}
In the supplement, the basic separation and concentration results for pairs of training examples are presented in Appendix \ref{sec:A}. Appendix \ref{sec:B} contains proofs of Proposition \ref{prop1} and Lemma \ref{lemma2}, and Appendix \ref{sec:C} contains proofs of the basic propositions on loss bounds. The proofs of Theorem \ref{thm1} and Corollary \ref{cor7} are given in Appendix \ref{sec:D}.
\bibliographystyle{jabes}
\bibliography{myrefs.bib}

\appendix
\section{Preliminaries}
\label{sec:A}
\begin{lemma}(From \citep{wainwright2019high}, Example 2.5)\label{lem:std}
If  $Z_1,...,Z_n$ are i.i.d Gaussian random variables $Z_i \sim \mathcal{N}(0,1)$,  then for any $\epsilon \in (0,1)$,
$$
\mathbb{P}\left(\left|\frac{1}{n}\sum_{i=1}^n Z_i^2-1 \right| \geq \epsilon\right) \leq 2\exp \left\{ - n \epsilon^2/8 \right\}.
$$
\end{lemma}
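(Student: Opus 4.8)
The plan is to prove this as a standard sub-exponential (chi-squared) concentration bound via the Chernoff method applied to the moment generating function of $Z_i^2-1$.

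First I would center the summands: set $Y_i = Z_i^2 - 1$, so the $Y_i$ are i.i.d.\ with $\mathbb{E}[Y_i]=0$. The key computation is the MGF of a squared standard normal, $\mathbb{E}[e^{\lambda Z_i^2}] = (1-2\lambda)^{-1/2}$ for $\lambda < 1/2$, hence
\[
\mathbb{E}\!\left[e^{\lambda Y_i}\right] = e^{-\lambda}(1-2\lambda)^{-1/2}, \qquad \lambda < \tfrac12 .
\]
Next I would derive the sub-Gaussian-type bound $\mathbb{E}[e^{\lambda Y_i}] \le e^{2\lambda^2}$ for all $|\lambda| \le 1/4$. Taking logarithms, for $0 \le \lambda \le 1/4$ this is $-\lambda - \tfrac12\log(1-2\lambda) = \sum_{k\ge2}\frac{(2\lambda)^k}{2k} \le \tfrac14\sum_{k\ge2}(2\lambda)^k = \frac{\lambda^2}{1-2\lambda} \le 2\lambda^2$; for $-1/4 \le \lambda < 0$ it follows from $\log(1+x)\ge x - x^2/2$ (with $x=-2\lambda\ge 0$), which gives $-\lambda-\tfrac12\log(1-2\lambda)\le \lambda^2 \le 2\lambda^2$.

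Then I would apply the Chernoff bound to the upper tail: for $0 \le \lambda \le 1/4$, using independence,
\[
\mathbb{P}\!\left(\frac1n\sum_{i=1}^n Y_i \ge \epsilon\right) \le e^{-n\lambda\epsilon}\prod_{i=1}^n\mathbb{E}[e^{\lambda Y_i}] \le e^{-n\lambda\epsilon + 2n\lambda^2}.
\]
Because $\epsilon \in (0,1)$, the unconstrained minimizer $\lambda = \epsilon/4$ lies in the admissible range $[0,1/4]$, and substituting it gives the bound $e^{-n\epsilon^2/8}$. The lower tail $\mathbb{P}(\frac1n\sum Y_i \le -\epsilon)$ is handled identically using $-1/4 \le \lambda \le 0$ and the same MGF estimate, and a union bound over the two tails produces the factor of $2$ in the statement.

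The only step that is not purely mechanical is pinning down the MGF bound with the right constants on the symmetric interval $|\lambda|\le 1/4$ — in particular making sure the constant $2$ in the exponent and the radius $1/4$ are compatible so that the optimizer $\lambda=\epsilon/4$ stays feasible for every $\epsilon<1$. Everything after that is a one-line optimization. Since the statement is quoted verbatim as Example 2.5 of \citep{wainwright2019high}, one could alternatively just cite that reference, but the self-contained argument above is short enough to record in full.
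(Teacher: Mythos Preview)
Your argument is correct: the Chernoff bound applied to the centered chi-squared summands, together with the MGF estimate $\log\mathbb{E}[e^{\lambda(Z^2-1)}]\le 2\lambda^2$ on $|\lambda|\le 1/4$, yields exactly the stated bound after optimizing at $\lambda=\epsilon/4$ and taking a union over the two tails. Both the positive-$\lambda$ series estimate and the negative-$\lambda$ estimate via $\log(1+x)\ge x-x^2/2$ check out, and the feasibility of the optimizer for all $\epsilon\in(0,1)$ is handled.

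As for comparison with the paper: the paper does not actually prove this lemma. It is stated with the attribution ``From \cite{wainwright2019high}, Example~2.5'' and used as a black box; no argument is given. Your derivation is precisely the standard sub-exponential/Chernoff computation that Wainwright records in that example, so you are supplying the proof the paper chose to outsource. Including it makes the appendix self-contained at negligible cost, which is a reasonable choice; alternatively, as you note, a bare citation would also suffice.
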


\begin{corollary}\label{cor1}
If $\bx=(X_1,...,X_p)$ is a multivariate Gaussian random variable $\bx\sim \mathcal{N}(\mathbf{0},I_p)$, then $\mathbb{E}\left(\|\bx\|^2\right)=p$ and for any $\epsilon \in (0,1)$,
$$
\mathbb{P}\left(\left|\frac{1}{p}\|\bx\|^2-1 \right| \geq \epsilon\right) \leq 2\exp\{ - p \epsilon^2/8\}.
$$
\end{corollary}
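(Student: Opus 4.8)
The plan is to reduce the statement directly to Lemma \ref{lem:std}. First I would observe that since $\bx \sim \mathcal{N}(\mathbf{0}, I_p)$, its coordinates $X_1, \dots, X_p$ are independent and each $X_j \sim \mathcal{N}(0,1)$, so that $\|\bx\|^2 = \sum_{j=1}^p X_j^2$ is exactly a sum of squares of $p$ i.i.d.\ standard Gaussians.

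For the expectation, I would use linearity together with $\mathbb{E}(X_j^2) = \mathrm{Var}(X_j) = 1$ to get $\mathbb{E}(\|\bx\|^2) = \sum_{j=1}^p \mathbb{E}(X_j^2) = p$.

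For the concentration bound, I would apply Lemma \ref{lem:std} with $n = p$ and $Z_j = X_j$, which immediately yields
\[
\mathbb{P}\left(\left|\frac{1}{p}\|\bx\|^2 - 1\right| \geq \epsilon\right) = \mathbb{P}\left(\left|\frac{1}{p}\sum_{j=1}^p X_j^2 - 1\right| \geq \epsilon\right) \leq 2\exp\{-p\epsilon^2/8\}
\]
for every $\epsilon \in (0,1)$, which is the claimed inequality.

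There is essentially no obstacle here: the only thing to check is that the coordinates of an isotropic Gaussian vector really are i.i.d.\ $\mathcal{N}(0,1)$, which is immediate from the form of the density (the covariance $I_p$ makes the joint density factor into a product of standard normal densities). So this corollary is just a restatement of Lemma \ref{lem:std} in vector notation, and the ``hard part'' is only making sure the hypotheses of that lemma are matched exactly.
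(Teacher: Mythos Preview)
Your proposal is correct and matches the paper's approach exactly: the paper's proof is the one-line remark that the result follows from Lemma~\ref{lem:std} by taking $Z_i = X_i$, $i=1,\dots,p$, which is precisely what you spell out in more detail.
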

\begin{proof}
Follows from Lemma \ref{lem:std} above taking $Z_i=X_i, i=1,...,p$.
\end{proof}
\begin{corollary}\label{cor2}
If $\bx=(X_1,...,X_p), \by=(Y_1,...,Y_p)$ are independent multivariate Gaussian random variables $\bx,\by\sim \mathcal{N}(\mathbf{0},I_p)$, then $\mathbb{E}\left(\|\bx-\by\|^2\right)=2p$ and for any $\epsilon \in (0,1)$,
$$
\mathbb{P}\left(\left|\frac{1}{2p}\|\bx-\by\|^2-1 \right| \geq \epsilon\right) \leq 2\exp\{ - p \epsilon^2/8\}.
$$
\end{corollary}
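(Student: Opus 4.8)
The plan is to reduce this statement to Corollary \ref{cor1} by exploiting the fact that the difference of two independent standard Gaussian vectors is again (a scalar multiple of) a standard Gaussian vector. First I would set $\bz = \frac{1}{\sqrt 2}(\bx - \by)$. Since $\bx$ and $\by$ are independent with $\bx,\by \sim \mathcal N(\mathbf 0, I_p)$, the vector $\bx - \by$ is Gaussian with mean $\mathbf 0$ and covariance $I_p + I_p = 2I_p$, hence $\bz \sim \mathcal N(\mathbf 0, I_p)$. This is the only place independence is used, and it is the step to state carefully — that the covariance of the difference adds rather than, say, cancels.

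Next I would apply Corollary \ref{cor1} to $\bz$. It gives $\mathbb E(\|\bz\|^2) = p$; since $\|\bz\|^2 = \frac{1}{2}\|\bx-\by\|^2$, this yields $\mathbb E(\|\bx-\by\|^2) = 2p$, the first claim. For the tail bound, Corollary \ref{cor1} gives, for any $\epsilon \in (0,1)$,
$$
\mathbb P\!\left(\left|\tfrac{1}{p}\|\bz\|^2 - 1\right| \ge \epsilon\right) \le 2\exp\{-p\epsilon^2/8\}.
$$
Now I would simply substitute $\tfrac1p\|\bz\|^2 = \tfrac{1}{2p}\|\bx-\by\|^2$ into the left-hand side to obtain exactly the desired inequality.

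There is essentially no obstacle here beyond bookkeeping: the one thing to get right is the normalization constant $\sqrt 2$, so that $\bz$ is genuinely standard and the event $\{|\tfrac1p\|\bz\|^2 - 1|\ge\epsilon\}$ matches $\{|\tfrac{1}{2p}\|\bx-\by\|^2 - 1|\ge\epsilon\}$ verbatim. An equivalent route, if one prefers to avoid introducing $\bz$, is to write $\|\bx-\by\|^2 = \sum_{i=1}^p (X_i - Y_i)^2$ with $(X_i-Y_i)/\sqrt2$ i.i.d.\ $\mathcal N(0,1)$ and invoke Lemma \ref{lem:std} directly with $n = p$; this produces the same bound.
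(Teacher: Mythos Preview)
Your proposal is correct and matches the paper's argument: the paper's proof is the one-liner ``Follows from Lemma~\ref{lem:std} above taking $Z_i=(X_i-Y_i)/\sqrt{2}$, $i=1,\dots,p$,'' which is exactly the alternative route you spell out at the end, and your primary route via Corollary~\ref{cor1} is just that same step with the vector notation $\bz=(\bx-\by)/\sqrt{2}$ in place of the coordinates.
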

\begin{proof}
Follows from Lemma \ref{lem:std} above taking $Z_i=(X_i-Y_i)/\sqrt{2}, i=1,...,p$.
\end{proof}

Using these results, it follows that with high probability the negatives are far away from each other.
\begin{corollary}[Separation between negatives] \label{cor:sep_neg}
 For two negatives $\bx_i$ and $\bx_k$, with probability at least $1-2\exp\{ -p/128\}$, the separation satisfies
$$
\left\|\bx_{i}-\bx_{k}\right\|^2>1.5p,  \quad \bx_{i},\bx_{k}\in H.
$$
\end{corollary}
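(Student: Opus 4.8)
The plan is to derive this directly from Corollary~\ref{cor2}. By the model assumptions, two negatives $\bx_i,\bx_k\in H$ are independent draws from $\mathcal{N}(\mathbf{0},I_p)$, so the pair $(\bx_i,\bx_k)$ is exactly in the setting of that corollary, and the only work is to choose the deviation parameter $\epsilon$ appropriately and to translate the event $\|\bx_i-\bx_k\|^2>1.5p$ into the normalized form $\big|\tfrac{1}{2p}\|\bx_i-\bx_k\|^2-1\big|\ge\epsilon$ that the corollary controls.

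First I would rewrite the target: $\|\bx_i-\bx_k\|^2>1.5p$ is equivalent to $\tfrac{1}{2p}\|\bx_i-\bx_k\|^2>\tfrac34$, i.e. $\tfrac{1}{2p}\|\bx_i-\bx_k\|^2-1>-\tfrac14$. Hence the complementary ``bad'' event $\{\|\bx_i-\bx_k\|^2\le 1.5p\}$ is contained in $\big\{\big|\tfrac{1}{2p}\|\bx_i-\bx_k\|^2-1\big|\ge\tfrac14\big\}$. Next I would apply Corollary~\ref{cor2} with $\epsilon=\tfrac14\in(0,1)$, which bounds the probability of that event by $2\exp\{-p(1/4)^2/8\}=2\exp\{-p/128\}$, since $(1/4)^2/8=1/128$. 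Taking complements gives $\mathbb{P}(\|\bx_i-\bx_k\|^2>1.5p)\ge 1-2\exp\{-p/128\}$, as claimed.

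There is no real obstacle here; the statement is essentially a one-line specialization. The only points requiring care are purely bookkeeping: checking that the chosen $\epsilon=1/4$ lies in the admissible interval $(0,1)$, noting that we only need the lower-tail half of the two-sided bound (so the factor $2$ is in fact conservative but harmless), and confirming the arithmetic $(1/4)^2/8=1/128$ that produces the exponent stated in the corollary.
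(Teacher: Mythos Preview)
Your argument is correct and matches the paper's own proof essentially step for step: both recognize that $\bx_i-\bx_k\sim\mathcal{N}(\mathbf{0},2I_p)$, invoke Corollary~\ref{cor2} with $\epsilon=1/4$, and read off the lower-tail bound $\mathbb{P}(\|\bx_i-\bx_k\|^2\le 1.5p)\le 2\exp\{-p/128\}$. Your remark that only the lower tail is needed (so the factor $2$ is conservative) is a nice observation the paper does not make explicit.
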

\begin{proof}
Since $\bx_k \sim \mathcal{N}(\mathbf{0},I_p)$ and $\bx_i \sim \mathcal{N}(\mathbf{0},I_p)$, then  $\bx_i-\bx_k\sim \mathcal{N}(\mathbf{0},2I_p)$, thus $\mathbb{E}\left(\|\bx_i-\bx_k\|^2\right)=2p$.
According to Corollary \ref{cor2}, it follows that
 $$
\mathbb{P}\left(\left|\frac{\|\bx_i-\bx_k\|^2}{2p}-1 \right| \geq \epsilon\right) \leq 2\exp\left\{ - p\epsilon^2 /8\right\},
$$
then
 $$
\mathbb{P}\left(\|\bx_i-\bx_k\|^2\leq 2p(1 - \epsilon)\right) \leq 2\exp\left\{ - p\epsilon^2 /8\right\}.
$$
Then with high probability at least $1-2\exp\{ - p\epsilon^2/8\}$, the separation satisfies
$$
\|\bx_i-\bx_k\|^2> 2p(1 - \epsilon).
$$
Now take $\epsilon=1/4$ so that with high probability at least $1-2\exp\{ - p/128\}$, the separation satisfies
$$
\|\bx_i-\bx_k\|^2>1.5p.
$$
\end{proof} 
It then follows that the positives from the same cluster are within a certain radius from each other with high probability.
\begin{corollary}[Concentration of positives in the same cluster]\label{cor:con_pos}
For any positive cluster $S_{j}$ with mean $\bmu_{j}$ and covariance matrix $\sigma_{j}^{2} I_{p}$, with probability at least $1-2\exp\{-p/128\}$, the concentration is bounded as
$$
\left\|\bx_{i}-\bx_{k}\right\|^2<2.5 p\sigma_{j}^2, \quad \bx_{i},\bx_{k}\in S_{j}.
$$
\end{corollary}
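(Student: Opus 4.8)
The plan is to mirror the proof of Corollary \ref{cor:sep_neg} almost verbatim, making two changes: use the upper tail rather than the lower tail of the concentration inequality, and carry the cluster variance $\sigma_j^2$ through the rescaling. First I would note that since $\bx_i,\bx_k\in S_j$ are independent draws from $\mathcal{N}(\bmu_j,\sigma_j^2 I_p)$, their difference satisfies $\bx_i-\bx_k\sim\mathcal{N}(\mathbf{0},2\sigma_j^2 I_p)$; in particular $\mathbb{E}\|\bx_i-\bx_k\|^2=2p\sigma_j^2$, and the coordinates $Z_t=(X_{i,t}-X_{k,t})/(\sqrt{2}\,\sigma_j)$, $t=1,\dots,p$, are i.i.d.\ $\mathcal{N}(0,1)$.

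Next I would apply Lemma \ref{lem:std} (equivalently Corollary \ref{cor2}, after the $\sigma_j$ rescaling) to these $Z_t$, obtaining for any $\epsilon\in(0,1)$ the bound $\mathbb{P}\big(\,|\tfrac{\|\bx_i-\bx_k\|^2}{2p\sigma_j^2}-1|\ge\epsilon\,\big)\le 2\exp\{-p\epsilon^2/8\}$. Keeping only the upper-tail half of this event gives $\mathbb{P}\big(\|\bx_i-\bx_k\|^2\ge 2p\sigma_j^2(1+\epsilon)\big)\le 2\exp\{-p\epsilon^2/8\}$, so with probability at least $1-2\exp\{-p\epsilon^2/8\}$ one has $\|\bx_i-\bx_k\|^2< 2p\sigma_j^2(1+\epsilon)$.

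Finally I would take $\epsilon=1/4$, so that $2(1+\epsilon)=2.5$ and $\epsilon^2/8=1/128$, which yields exactly $\|\bx_i-\bx_k\|^2<2.5\,p\sigma_j^2$ with probability at least $1-2\exp\{-p/128\}$, as claimed. The computation is entirely routine; the only point needing any attention is the rescaling by $\sigma_j$, which is what keeps the statement clean and makes it the variance-scaled analogue of Corollary \ref{cor:sep_neg}. I do not anticipate any genuine obstacle.
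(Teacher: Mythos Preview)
Your proposal is correct and follows essentially the same route as the paper: reduce $\bx_i-\bx_k$ to a $\mathcal{N}(\mathbf{0},2\sigma_j^2 I_p)$ vector, apply the chi-square concentration bound (Lemma~\ref{lem:std}/Corollary~\ref{cor1}) after the $\sigma_j$ rescaling, keep the upper tail, and set $\epsilon=1/4$. If anything, your version is slightly more explicit about the standardization step $Z_t=(X_{i,t}-X_{k,t})/(\sqrt{2}\,\sigma_j)$ than the paper's.
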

\begin{proof}
Since $\bx_i \sim \mathcal{N}(\bmu_j,\sigma_j^2I_p)$ and $\bx_k \sim \mathcal{N}(\bmu_j,\sigma_j^2I_p)$, then  $\bx_i-\bx_k\sim \mathcal{N}(\mathbf{0},2\sigma_j^2I_p)$, thus $
\mathbb{E}\left(\|\bx_i-\bx_k\|^{2}\right)=2p\sigma_{j}^{2}
$. According to Corollary \ref{cor1}, it follows that
$$
\mathbb{P}\left(\left|\frac{\|\bx_i-\bx_k\|^2}{2p\sigma_j^2}-1 \right| \geq \epsilon\right) \leq 2\exp\left\{ - p\epsilon^2 /8\right\},
$$
then
$$
\mathbb{P}(\|\bx_i-\bx_k\|^2\geq 2p\sigma_j^2(1+\epsilon)) \leq 2\exp\{ - p \epsilon^2/8\}.
$$
Take $\epsilon=1/4$, yields
$$
\mathbb{P}(\|\bx_i-\bx_k\|^2\geq 2.5p\sigma_j^2) \leq 2\exp\{ - p/128\}.
$$
Therefore, with probability at least 
$1-2\exp\{ - p/128\}$, the concentration is bounded as
\[\|\bx_i-\bx_k\|^2 < 2.5 p \sigma_j^2.\]
\end{proof}

We then prove that the positives are far away from the negatives with high probability.
\begin{corollary}[Separation between positives and negatives] \label{cor:sep_pos_neg}
For negative $\bx_i$ and positive $\bx_k$ from cluster $S_{j}$ with mean $\bmu_{j}$ and covariance matrix $\sigma_{j}^{2} I_{p}$, with probability at least $1-2\exp\{ -p/128\}$, the separation satisfies
$$
\left\|\bx_{i}-\bx_{k}\right\|^2>p(1.5+0.75\sigma_{j}^2), \quad \bx_{i}\in H, \bx_{k}\in S_{j}.
$$
\end{corollary}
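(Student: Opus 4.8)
The plan is to treat the cluster mean $\bmu_j$ as random, exactly as specified by the model in Figure~\ref{fig:structure of GMM}, so that the difference $\bx_i-\bx_k$ becomes a single centered Gaussian vector to which the one-dimensional concentration inequality of Lemma~\ref{lem:std} applies directly, with no union bound. First I would write the negative as $\bx_i\sim\N(\mathbf{0},I_p)$ and the positive from cluster $j$ as $\bx_k=\bmu_j+\sigma_j\bz_k$, where $\bmu_j\sim\N(\mathbf{0},I_p)$ and $\bz_k\sim\N(\mathbf{0},I_p)$ are independent of each other and of $\bx_i$. Then $\bx_i-\bx_k=\bx_i-\bmu_j-\sigma_j\bz_k$ is a sum of independent centered Gaussians, hence $\bx_i-\bx_k\sim\N(\mathbf{0},(2+\sigma_j^2)I_p)$ and $\mathbb{E}\|\bx_i-\bx_k\|^2=p(2+\sigma_j^2)$.

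Next I would normalize: set $Z_\ell=(\bx_i-\bx_k)_\ell/\sqrt{2+\sigma_j^2}$ for $\ell=1,\dots,p$, so the $Z_\ell$ are i.i.d.\ $\N(0,1)$ and $\frac1p\sum_{\ell=1}^p Z_\ell^2=\|\bx_i-\bx_k\|^2/\big(p(2+\sigma_j^2)\big)$. Applying Lemma~\ref{lem:std} (equivalently Corollary~\ref{cor1} to the vector $(\bx_i-\bx_k)/\sqrt{2+\sigma_j^2}$) with $\epsilon=1/4$ yields
\[
\mathbb{P}\!\left(\|\bx_i-\bx_k\|^2\le \tfrac34\,p\,(2+\sigma_j^2)\right)\le 2\exp\{-p/128\}.
\]
Since $\tfrac34 p(2+\sigma_j^2)=p(1.5+0.75\sigma_j^2)$, the complement gives $\|\bx_i-\bx_k\|^2>p(1.5+0.75\sigma_j^2)$ with probability at least $1-2\exp\{-p/128\}$, which is the claim.

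I do not expect a substantive obstacle here; the only thing to get right is the bookkeeping of variances — recognizing that the randomness of $\bmu_j$ contributes an extra $I_p$ on top of the $(1+\sigma_j^2)I_p$ one would obtain by conditioning on $\bmu_j$, so the total per-coordinate variance is $2+\sigma_j^2$ rather than $1+\sigma_j^2$. Conditioning on a fixed $\bmu_j$ instead would produce a noncentral chi-square and force a two-event union bound (one event controlling $\|\bmu_j\|^2$, one controlling the residual), with a worse constant; the marginal viewpoint is the one that yields exactly the stated probability $1-2\exp\{-p/128\}$. Everything else is the same $\epsilon=1/4$ substitution already used in Corollaries~\ref{cor:sep_neg} and~\ref{cor:con_pos}.
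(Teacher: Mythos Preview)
Your proposal is correct and follows essentially the same approach as the paper: treat $\bmu_j$ as random so that $\bx_i-\bx_k\sim\N(\mathbf{0},(2+\sigma_j^2)I_p)$, then apply the chi-square concentration bound (Lemma~\ref{lem:std}/Corollary~\ref{cor1}) with $\epsilon=1/4$. Your write-up is in fact a bit cleaner, since the paper first conditions on $\bmu_j$ and then marginalizes, whereas you go directly to the marginal distribution.
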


\begin{proof}
Since $\bx_k \sim \mathcal{N}(\bmu_j,\sigma_j^2I_p)$ and $\bx_i \sim \mathcal{N}(\mathbf{0},I_p)$, then  $\bx_i-\bx_k\sim \mathcal{N}(\bmu_j,\sigma_j^2I_p+I_p)$, thus $\bx_i-\bx_k=\bmu_j+\boldsymbol{\epsilon_1}\sqrt{\sigma_j^2+1}$ with $\boldsymbol{\epsilon_1} \sim \mathcal{N}\left(\mathbf{0}, I_{p}\right).$ Since $\bmu_j\sim \mathcal{N}\left(\mathbf{0}, I_{p}\right)$, then $\bx_i-\bx_k$ is a Gaussian with 
$\mathbb{E}\left(\bx_i-\bx_k\right)=\mathbf{0}$ and 
$$
\mathbb{E}\left(\|\bx_i-\bx_k\|^{2}\right)=\mathbb{E}\left[\left(\bmu_j+\boldsymbol{\epsilon_1}\sqrt{\sigma_j^2+1}\right)^{T}\left(\bmu_j+\boldsymbol{\epsilon_1}\sqrt{\sigma_j^2+1}\right)\right]=
\mathbb{E}(\|\bmu_j\|^{2})+(\sigma_{j}^{2}+1) \mathbb{E}\left(\boldsymbol{\epsilon_1}^{T} \boldsymbol{\epsilon_1}\right),
$$
thus
$$
\mathbb{E}\left(\|\bx_i-\bx_k\|^{2}\right)=p+(\sigma_{j}^{2}+1) \mathbb{E}\left(\|\boldsymbol{\epsilon_1}\|^{2}\right)=
(2+\sigma_{j}^{2}) p.
$$
According to Corollary \ref{cor1}, it follows immediately that,
 $$
\mathbb{P}\left(\left|\frac{\|\bx_i-\bx_k\|^2}{(2+\sigma_j^2)p}-1 \right| \geq \epsilon\right) \leq 2\exp\left\{ - p\epsilon^2 /8\right\},
$$
then
$$
\mathbb{P}\left(\|\bx_i-\bx_k\|^2\leq p(2+\sigma_j^2)(1 - \epsilon)\right) \leq 2\exp\left\{ - p\epsilon^2 /8\right\}.
$$
Then with probability at least $1-2\exp\{ - p\epsilon^2/8\}$, the separation satisfies
$$
\|\bx_i-\bx_k\|^2> p(2+\sigma_j^2)(1 - \epsilon).
$$
Now take $\epsilon=1/4$, so that with probability at least $1-2\exp\{ - p/128\}$, the separation satisfies 
$$
\|\bx_i-\bx_k\|^2> p(1.5+0.75\sigma_{j}^{2}).
$$
\end{proof}

Moreover, positives from different clusters are also far from each other with high probability.
\begin{corollary}[Separation between positives in different clusters]\label{cor:sep_pos}
For positive $\bx_i$ from cluster $S_{i}$ with true mean $\boldsymbol{\mu}_{i}$ and covariance matrix $\sigma_{i}^{2} I_{p}$ and positive $\bx_k$ from another cluster $S_{j}$ with true mean $\boldsymbol{\mu}_{j}$ and covariance matrix $\sigma_{j}^{2} I_{p}$, with probability at least $1-2\exp\{ - p/128\}$, the separation satisfies
$$
\|{\bx_i}-{\bx_k}\|^2> p(1.5+0.75 \sigma_{i}^{2}+0.75\sigma_{j}^{2}),
\quad \mathbf{x}_{i} \in S_{i}, \mathbf{x}_{k} \in S_{j}.
$$
\end{corollary}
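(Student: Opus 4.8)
The plan is to follow exactly the same template as the preceding Corollary on separation between positives and negatives, the only new feature being that now \emph{both} means $\bmu_i$ and $\bmu_j$ are random. First I would write out the distribution of the difference vector. Since $\bx_i \sim \mathcal{N}(\bmu_i,\sigma_i^2 I_p)$ and $\bx_k \sim \mathcal{N}(\bmu_j,\sigma_j^2 I_p)$ independently, conditioning on the means gives $\bx_i-\bx_k \mid \bmu_i,\bmu_j \sim \mathcal{N}(\bmu_i-\bmu_j,(\sigma_i^2+\sigma_j^2)I_p)$, i.e. $\bx_i-\bx_k = \bmu_i-\bmu_j + \sqrt{\sigma_i^2+\sigma_j^2}\,\be_1$ with $\be_1\sim\mathcal{N}(\mathbf 0,I_p)$ independent of the means. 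By the model, the centers $\bmu_i,\bmu_j$ are themselves independent draws from $\mathcal{N}(\mathbf 0,I_p)$, so $\bmu_i-\bmu_j \sim \mathcal{N}(\mathbf 0, 2I_p)$, and combining the two independent Gaussian pieces yields $\bx_i-\bx_k \sim \mathcal{N}\big(\mathbf 0,(2+\sigma_i^2+\sigma_j^2)I_p\big)$, hence $\mathbb{E}\|\bx_i-\bx_k\|^2 = (2+\sigma_i^2+\sigma_j^2)p$.

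Next I would apply the concentration bound of Corollary \ref{cor1}: writing $\bx_i-\bx_k = \sqrt{2+\sigma_i^2+\sigma_j^2}\,\bz$ with $\bz\sim\mathcal{N}(\mathbf 0,I_p)$, Corollary \ref{cor1} gives, for any $\epsilon\in(0,1)$,
\[
\mathbb{P}\!\left(\left|\frac{\|\bx_i-\bx_k\|^2}{(2+\sigma_i^2+\sigma_j^2)p}-1\right|\ge \epsilon\right)\le 2\exp\{-p\epsilon^2/8\},
\]
so that with probability at least $1-2\exp\{-p\epsilon^2/8\}$ we have $\|\bx_i-\bx_k\|^2 > (2+\sigma_i^2+\sigma_j^2)p(1-\epsilon)$. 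Finally I would set $\epsilon=1/4$, which makes the failure probability $2\exp\{-p/128\}$ and the bound $(2+\sigma_i^2+\sigma_j^2)p\cdot \tfrac34 = p(1.5+0.75\sigma_i^2+0.75\sigma_j^2)$, giving the claim.

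There is really no hard step here: the argument is a verbatim adaptation of Corollary \ref{cor:sep_pos_neg} with the extra $\mathcal{N}(\mathbf 0,I_p)$ coming from the second random center instead of from the negative. The only point that requires a moment's care is justifying that $\bmu_i$ and $\bmu_j$ are independent of each other and of the within-cluster noise, which is exactly what the generative model in Figure \ref{fig:structure of GMM} stipulates; once that is noted, the collapse of the four-way independent sum into a single isotropic Gaussian and the resulting variance $2+\sigma_i^2+\sigma_j^2$ is immediate, and the concentration inequality does the rest.
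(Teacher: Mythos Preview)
Your proposal is correct and follows essentially the same argument as the paper: both derive that $\bx_i-\bx_k\sim\mathcal{N}(\mathbf 0,(2+\sigma_i^2+\sigma_j^2)I_p)$ by combining the independent Gaussian contributions from the two random centers and the two within-cluster noise terms, then apply Corollary~\ref{cor1} with $\epsilon=1/4$. If anything, you are slightly more explicit than the paper in flagging the independence assumptions needed to collapse the sum into a single isotropic Gaussian.
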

\begin{proof}
Since $\bx_k \sim \mathcal{N}(\bmu_j,\sigma_j^2I_p)$ and $\bx_i \sim \mathcal{N}(\bmu_i,\sigma_i^2I_p)$, then  $\bx_i-\bx_k\sim \mathcal{N}(\bmu_j-\bmu_i,\sigma_j^2I_p+\sigma_i^2I_p)$, thus $\bx_i-\bx_k=\bmu_j-\bmu_i+\boldsymbol{\epsilon_1}\sqrt{\sigma_j^2+\sigma_i^2}$ with $\boldsymbol{\epsilon_1} \sim \mathcal{N}\left(\mathbf{0}, I_{p}\right).$ Since $\bmu_j\sim \mathcal{N}\left(\mathbf{0}, I_{p}\right)$ and $\bmu_i\sim \mathcal{N}\left(\mathbf{0}, I_{p}\right)$, then $\bmu_j - \bmu_i \sim \mathcal{N}\left(\mathbf{0}, 2I_{p}\right)$, then $\bx_i-\bx_k$ is a Gaussian with 
$\mathbb{E}\left(\bx_i-\bx_k\right)=\mathbf{0}$ and 
$$
\mathbb{E}\left(\|\bx_i-\bx_k\|^{2}\right)=\mathbb{E}\left[\left(\bmu_j-\bmu_i+\boldsymbol{\epsilon_1}\sqrt{\sigma_j^2+\sigma_i^2}\right)^{T}\left(\bmu_j-\bmu_i+\boldsymbol{\epsilon_1}\sqrt{\sigma_j^2+\sigma_i^2}\right)\right],
$$
thus
$$
\mathbb{E}\left(\|\bx_i-\bx_k\|^{2}\right)=\mathbb{E}(\|\bmu_j-\bmu_i\|^{2})+(\sigma_{j}^{2}+\sigma_i^2) \mathbb{E}\left(\boldsymbol{\epsilon_1}^{T} \boldsymbol{\epsilon_1}\right)=2p+(\sigma_{j}^{2}+\sigma_i^2) \mathbb{E}\left(\|\boldsymbol{\epsilon_1}\|^{2}\right)=
(2+\sigma_i^2+\sigma_{j}^{2}) p.
$$
According to Corollary \ref{cor1}, it follows that $$
\mathbb{P}\left(\left|\frac{\|\bx_i-\bx_k\|^2}{(2+\sigma_i^2+\sigma_j^2)p}-1 \right| \geq \epsilon\right) \leq 2\exp\left\{ - p\epsilon^2 /8\right\},
$$
then
$$
\mathbb{P}\left(\|\bx_i-\bx_k\|^2\leq p(2+\sigma_i^2+\sigma_j^2)(1 - \epsilon)\right) \leq 2\exp\left\{ - p\epsilon^2 /8\right\}.
$$
Then with probability at least $1-2\exp\{ - p\epsilon^2/8\}$, the separation satisfies
$$
\|\bx_i-\bx_k\|^2> p(2+\sigma_i^2+\sigma_j^2)(1 - \epsilon).
$$
Now take $\epsilon=1/4$ so that with probability at least $1-2\exp\{ - p/128\}$, the separation satisfies 
$$
\|\bx_i-\bx_k\|^2> p(1.5+0.75\sigma_i^2+0.75\sigma_{j}^{2}).
$$
\end{proof}
The previous corollaries are used to prove that with high probability, all positives from each cluster are within $2.5 p\rho^2$ of each other, and $2.5 p\rho^2$ away from the other clusters and from the negatives.

\section{Proof of Key Lemmas and Propositions }
\label{sec:B}
\begin{proposition}\label{prop1}
Given $N$ samples from a GMM with outliers,  and $\sigma_{max}\leq \rho < \sqrt{0.6}$, then with probability at least $1-6N^2\exp\{-p/128\}$, the distance between positives within a cluster satisfies
$$\left\|\bx_{i}-\bx_{j}\right\|^2<2.5p\rho^2, \quad \forall \bx_i, \bx_j  \text{ s.t. } l(\bx_i)=l(\bx_j)>0,$$ 
and the distance between positives from a cluster and other samples not in that cluster satisfies$$\left\|\bx_{i}-\bx_{j}\right\|^2>2.5p\rho^2 \quad \forall \bx_i, \bx_j
\text{ s.t. } l(\bx_j)\not =l(\bx_i)>0.$$ 
\end{proposition}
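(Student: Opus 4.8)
\emph{Plan.} I would obtain Proposition~\ref{prop1} as a union bound over the four concentration/separation corollaries of Appendix~\ref{sec:A} (Corollaries~\ref{cor:sep_neg}--\ref{cor:sep_pos}), using the two halves of Assumption~\ref{assumption1} to make the constants line up. The point is that $\sigma_{max}\le\rho$ turns the within-cluster concentration radius into the target radius $2.5p\rho^2$, while $\rho<\sqrt{0.6}$ guarantees that the universal $1.5p$ separation floor already exceeds $2.5p\rho^2$.

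For the within-cluster inequality, fix two positives $\bx_i,\bx_j$ with $l(\bx_i)=l(\bx_j)=k>0$. Corollary~\ref{cor:con_pos} gives, with probability at least $1-2\exp\{-p/128\}$, that $\|\bx_i-\bx_j\|^2<2.5p\sigma_k^2$. Since $\sigma_k\le\sigma_{max}\le\rho$ by Assumption~\ref{assumption1}, this yields $\|\bx_i-\bx_j\|^2<2.5p\sigma_k^2\le 2.5p\rho^2$, and the inequality remains strict even in the extreme case $\sigma_k=\rho$.

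For the cross-cluster inequality, fix a positive $\bx_i$ with $l(\bx_i)=k>0$ and any $\bx_j$ with $l(\bx_j)\ne k$. There are two configurations. If $\bx_j$ is a negative, Corollary~\ref{cor:sep_pos_neg} gives $\|\bx_i-\bx_j\|^2>p(1.5+0.75\sigma_k^2)\ge 1.5p$ with probability at least $1-2\exp\{-p/128\}$; if $\bx_j$ is a positive from another cluster $S_l$, Corollary~\ref{cor:sep_pos} gives $\|\bx_i-\bx_j\|^2>p(1.5+0.75\sigma_k^2+0.75\sigma_l^2)\ge 1.5p$ with the same probability. In both cases $\|\bx_i-\bx_j\|^2>1.5p$, and since $\rho<\sqrt{0.6}$ implies $2.5\rho^2<1.5$, we conclude $\|\bx_i-\bx_j\|^2>1.5p>2.5p\rho^2$.

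Finally I would take a union bound over all pairs of the $N$ samples: there are at most $N^2$ ordered pairs, and for each the relevant bad event (from whichever configuration applies) occurs with probability at most $2\exp\{-p/128\}$; a crude accounting over all pairs and the three configuration types bounds the total failure probability by $6N^2\exp\{-p/128\}$, so all the stated inequalities hold simultaneously with probability at least $1-6N^2\exp\{-p/128\}$. There is no genuine analytic difficulty here; the only thing to watch is the bookkeeping of which corollary applies to each pair and the observation that the two constraints in Assumption~\ref{assumption1} are precisely what is needed so that even the weakest separation corollary clears the threshold $2.5p\rho^2$ and the concentration corollary stays below it.
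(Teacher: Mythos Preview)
Your proposal is correct and follows essentially the same route as the paper's proof: apply Corollary~\ref{cor:con_pos} for the within-cluster bound (using $\sigma_k\le\sigma_{max}\le\rho$), apply Corollaries~\ref{cor:sep_pos_neg} and~\ref{cor:sep_pos} for the two cross-cluster configurations (using $\rho<\sqrt{0.6}$ so that $2.5p\rho^2<1.5p$), and then union-bound each of the three cases over at most $N^2$ pairs to arrive at $6N^2\exp\{-p/128\}$. One cosmetic remark: Corollary~\ref{cor:sep_neg} (negative--negative separation) plays no role here, since the proposition only constrains pairs in which at least one point is a positive; you correctly do not invoke it in the body of your argument, so you may as well drop it from the opening sentence of your plan.
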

\begin{proof}
From Corollary \ref{cor:con_pos}, with probability at least $1-2\exp\{-p/128\}$, the distance between two positives in the same cluster is bounded as
$$
\left\|\bx_{i}-\bx_{j}\right\|^2<2.5 p\sigma_{l(\bx_i)}^2, \quad l(\bx_i)=l(\bx_j)>0.
$$
Using the union bound, with probability at least $1-2N^2\exp\{ -p/128\}$, the distances between all positives in the same cluster are bounded as
$$
\left\|\bx_{i}-\bx_{j}\right\|^2<2.5 p\sigma_{l(\bx_i)}^2\leq 2.5 p\rho^2,   \quad \forall \bx_i, \bx_j \text{ s.t. } l(\bx_i)=l(\bx_j)>0.
$$
From Corollary \ref{cor:sep_pos_neg}, with probability at least $1-2\exp\{ - p/128\}$, the distance between a positive and a negative satisfies
$$
\|{\bx_i}-{\bx_j}\|^2> p(1.5+0.75\sigma_{l(\bx_i)}^{2}),
\quad l(\bx_i)>0, l(\bx_j)=-1.
$$
Using the union bound, with probability at least $1-2N^2\exp\{-p/128\}$, the distance between any positive and negative satisfies
$$
\left\|\bx_{i}-\bx_{j}\right\|^2>p(1.5+0.75\sigma_{l(\bx_i)}^{2}), \quad \forall  \mathbf{x}_{i},  \bx_j, \text{ s.t. } l(\bx_i)>0, l(\bx_j)=-1.
$$
Given $\sigma_{max}\leq \rho < \sqrt{\frac{1.5+0.75\sigma_k^2}{2.5}}$, since $0<\sigma_k<1$, therefore, with $\sigma_{max}\leq\rho < \sqrt{\frac{1.5}{2.5}}= \sqrt{0.6}$, then with probability at least $1-2N^2\exp\{ -p/128\}$, the distance between any positive and negative  satisfies
$$
\left\|\bx_{i}-\bx_{j}\right\|^2>2.5p\rho^2, \quad \forall  \mathbf{x}_{i},  \bx_j, \text{ s.t. } l(\bx_i)>0, l(\bx_j)=-1.
$$
From Corollary \ref{cor:sep_pos}, with probability at least $1-2\exp\{ - p/128\}$, the distance between two positives from different clusters satisfies
$$
\|{\bx_i}-{\bx_j}\|^2> p(1.5+0.75 (\sigma_{l(\bx_i)}^{2}+\sigma_{l(\bx_j)}^{2})),
\quad  0<l(\bx_i)\not =l(\bx_j)>0.
$$
Using the union bound, with probability at least $1-2N^2\exp\{-p/128\}$, the distance between any two positives from different clusters satisfies
$$
\left\|\bx_{i}-\bx_{j}\right\|^2>p(1.5+0.75 (\sigma_{l(\bx_i)}^{2}+\sigma_{l(\bx_j)}^{2})), \quad \forall  \mathbf{x}_{i},  \mathbf{x}_{j}\text{ s.t. } 0<l(\bx_i)\not =l(\bx_j)>0.
$$
Given $\sigma_{max}\leq\rho < \sqrt{\frac{1.5+0.75(\sigma_i^2+\sigma_j^2)}{2.5}}$, since $0<\sigma_i,\sigma_j<1$, therefore, with $\sigma_{max}\leq\rho < \sqrt{0.6}$, then with probability at least $1-2N^2\exp\{ -p/128\}$, the distance between any two positives from different clusters satisfies
$$\left\|\bx_{i}-\bx_{j}\right\|^2>2.5p\rho^2, \quad \forall  \mathbf{x}_{i}, \mathbf{x}_{j} \text{ s.t. } 0<l(\bx_i)\not =l(\bx_j)>0.$$
Therefore, with probability at least $1-4N^2\exp\{ -p/128\}$, the distance between any positive and any sample not from that cluster satisfies
$$
\left\|\bx_{i}-\bx_{j}\right\|^2>2.5 p\rho^2, \quad \forall  \mathbf{x}_{i}, \mathbf{x}_{j} \text{ s.t. } l(\bx_j)\not =l(\bx_i)>0.
$$
Therefore, with probability at least $1-6N^2\exp\{ -p/128\}$, the following bounds on positives within a cluster and between clusters are satisfied
$$
\left\|\bx_{i}-\bx_{j}\right\|^2>2.5 p\rho^2, \quad \forall  \mathbf{x}_{i}, \mathbf{x}_{j} \text{ s.t. } l(\bx_j)\not =l(\bx_i)>0,
$$
and 
$$
\left\|\bx_{i}-\bx_{j}\right\|^2<2.5 p\rho^2, \quad \forall  \mathbf{x}_{i}, \mathbf{x}_{j} \text{ s.t. } l(\bx_j)=l(\bx_i)>0.
$$
\end{proof}

Finally, a bound for the probability that a sample $S$ has at least one element from each positive cluster is proven.
\begin{lemma}
\label{lemma2}
If the clusters have weights $w_1,...,w_m$, with $\sum_{k=1}^m w_k\leq 1$, then the probability that a sample $S$ of size $|S|=n$ contains at least one observation from each cluster is at least
\[
\mathbb{P}(|\{\bx\in S, l(\bx)=k\}|\geq 1, \forall k=\overline{1,m})\geq 1-\sum_{k=1}^m(1-w_k)^n.
\]
\end{lemma}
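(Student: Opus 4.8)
The plan is to pass to the complementary event and apply a union bound. Write $S_k=\{\bx\in S: l(\bx)=k\}$ for the points of $S$ lying in positive cluster $k$, and let $A_k$ denote the event $\{S_k=\emptyset\}$ that cluster $k$ is entirely missed by the subsample. The event that every positive cluster is represented, $\{|S_k|\geq 1 \ \forall k=\overline{1,m}\}$, is exactly the complement of $\bigcup_{k=1}^m A_k$, so it suffices to show $\mathbb{P}\bigl(\bigcup_{k=1}^m A_k\bigr)\leq \sum_{k=1}^m (1-w_k)^n$. This is immediate from the union bound $\mathbb{P}\bigl(\bigcup_{k=1}^m A_k\bigr)\leq \sum_{k=1}^m \mathbb{P}(A_k)$ once each term $\mathbb{P}(A_k)$ is controlled.

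To evaluate $\mathbb{P}(A_k)$, the key observation is that although $S$ is drawn without replacement from the $N$ observations, those $N$ observations are themselves i.i.d.\ draws from the mixture \eqref{equ:GMM}; by exchangeability, the labels $(l(\bx))_{\bx\in S}$ of any fixed $n$-subset (hence also of a randomly chosen one) form $n$ i.i.d.\ categorical variables with $\mathbb{P}(l(\bx)=k)=w_k$. The event $A_k$ occurs precisely when each of the $n$ points of $S$ carries a label different from $k$, so independence gives $\mathbb{P}(A_k)=(1-w_k)^n$; the hypothesis $\sum_{k=1}^m w_k\leq 1$ is consistent here since the remaining mass $w_{-1}$ is carried by the outliers. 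Substituting into the union bound yields $\mathbb{P}\bigl(\bigcup_{k=1}^m A_k\bigr)\leq \sum_{k=1}^m (1-w_k)^n$, and taking complements gives the stated lower bound.

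The only delicate point is the claim that the labels of the subsampled points behave as $n$ independent categorical variables with the mixture weights; the exchangeability argument above settles it, and as an alternative one could bound $\mathbb{P}(A_k)$ by a hypergeometric tail and verify it does not exceed $(1-w_k)^n$, but that route is messier and unnecessary. Everything else is the routine union-bound estimate, so I do not anticipate any genuine obstruction.
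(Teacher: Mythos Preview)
Your proof is correct and follows the same route as the paper: compute $\mathbb{P}(A_k)=(1-w_k)^n$, apply the union bound over $k$, and take complements. Your exchangeability remark about subsampling without replacement is actually more careful than the paper, which simply asserts $\mathbb{P}(l(\bx)\neq k,\ \forall \bx\in S)=(1-w_k)^n$ without comment.
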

\begin{proof}
The probability that $S$ contains no elements from cluster $k$ is 
\[
\mathbb{P}(l(\bx)\not = k, \forall \bx\in S)=(1-w_k)^n.
\] 
Then using the union bound, the probability that there is a $k$ such that $S$ does not contain any elements from cluster $k$ is
\[
\mathbb{P}(\exists k, l(\bx) \not = k, \forall \bx \in S)\leq \sum_{k=1}^m (1-w_k)^n,
\]
which implies the result.
\end{proof}
\section{Proofs of Loss Bounds}
\label{sec:C}
In this section, the obtained concentration and separation results from Appendix \ref{sec:A} and Appendix \ref{sec:B} are used to obtain bounds on the loss function values.

First, it is proven that with high probability, the loss value of a negative is $-F$.
\begin{proposition}\label{prop2}
Given $N$ samples from a GMM with outliers, and $\sigma_{max}\leq\rho<\sqrt{0.6}$, then for a negative sample $\bx_j, l(\bx_j)=-1$, with probability at least $1-4N\exp\{ -p/128\}$, the loss satisfies
$L(\bx_j;\rho)=-F.$
\end{proposition}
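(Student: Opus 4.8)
The plan is to split the defining sum for $L(\bx_j;\rho)=\sum_{i=1}^N\ell(\bx_i-\bx_j;\rho)$ into the single self-term $i=j$ and the $N-1$ cross-terms $i\neq j$, and to show that with high probability every cross-term vanishes. Since $F=2.5>0$, the self-term is $\ell(\bx_j-\bx_j;\rho)=\ell(\mathbf 0;\rho)=\min(-F,0)=-F$. Because $\ell(\bd;\rho)\le 0$ for every $\bd$, it therefore suffices to establish that, with probability at least $1-4N\exp\{-p/128\}$, one has $\ell(\bx_i-\bx_j;\rho)=0$ for all $i\neq j$, i.e. $\|\bx_i-\bx_j\|^2\ge Fp\rho^2=2.5p\rho^2$.

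Next I would fix an index $i\neq j$ and bound the probability that the pair is too close. Assumption~\ref{assumption1} gives $\rho<\sqrt{0.6}$, hence $2.5p\rho^2<1.5p$, so it is enough to guarantee $\|\bx_i-\bx_j\|^2>1.5p$. There are two cases for the type of $\bx_i$. If $\bx_i$ is a negative, Corollary~\ref{cor:sep_neg} gives $\|\bx_i-\bx_j\|^2>1.5p$ with probability at least $1-2\exp\{-p/128\}$. If $\bx_i$ is a positive from some cluster $S_k$, Corollary~\ref{cor:sep_pos_neg} gives $\|\bx_i-\bx_j\|^2>p(1.5+0.75\sigma_k^2)>1.5p$ with probability at least $1-2\exp\{-p/128\}$. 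In either case the pair $(\bx_i,\bx_j)$ fails to be well separated with probability at most $2\exp\{-p/128\}$.

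Finally I would apply the union bound. Bounding the number of negatives other than $\bx_j$ by $N$ and the number of positives by $N$, the probability that some $i\neq j$ has $\|\bx_i-\bx_j\|^2\le 2.5p\rho^2$ is at most $2N\exp\{-p/128\}+2N\exp\{-p/128\}=4N\exp\{-p/128\}$. On the complementary event every cross-term of $L(\bx_j;\rho)$ equals $0$, so $L(\bx_j;\rho)=-F+0=-F$, which is the claim.

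There is no genuine obstacle in this argument; it is essentially a union bound over pairs. The only points requiring care are: that $F=2.5>0$ forces the self-term to be exactly $-F$ rather than $0$; the elementary reduction $2.5\rho^2<1.5$ coming from $\rho<\sqrt{0.6}$, which lets the separation corollaries (whose bounds involve $1.5p$) control the threshold $2.5p\rho^2$; and handling both possible types of $\bx_i$ within each pair before summing, which is what produces the stated constant $4N$.
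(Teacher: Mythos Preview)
Your proposal is correct and follows essentially the same approach as the paper: split off the self-term $\ell(\mathbf 0;\rho)=-F$, invoke Corollaries~\ref{cor:sep_neg} and~\ref{cor:sep_pos_neg} to show each cross-term vanishes (using $\rho<\sqrt{0.6}$ so that $2.5p\rho^2<1.5p$), and union-bound over the at most $N$ negatives and $N$ positives to obtain the $4N\exp\{-p/128\}$ failure probability. The organization and the constants match the paper's proof.
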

\begin{proof}
From Corollary \ref{cor:sep_neg}, for $\bx_i, l(\bx_i)=-1$,  with probability at least $1-2\exp\{ -p/128\}$, the distance between a negative and $\bx_j$ satisfies
$$
\left\|\bx_{j}-\bx_{i}\right\|^2>1.5p, \quad l(\bx_i)=-1, i\not=j.
$$
Using the union bound, with probability at least $1-2N\exp\{ -p/128\}$, the distance between any other negative and $\bx_j$ satisfies
$$
\left\|\bx_{j}-\bx_{i}\right\|^2>1.5p,   \quad \forall \bx_i, l(\bx_i)=-1, i\not=j.
$$
Given $\sigma_{max}\leq\rho<\sqrt{0.6}$, then with probability at least $1-2N\exp\{ -p/128\}$,  the distance between any other negative and $\bx_j$ satisfies
$$\left\|\bx_{j}-\bx_{i}\right\|^2>1.5p>2.5p\rho^2, \quad \forall \bx_i, l(\bx_i)=-1, i\not=j.$$
From Corollary \ref{cor:sep_pos_neg}, for $\bx_i, l(\bx_i)>0$, with probability at least $1-2\exp\{ -p/128\}$, the distance between a positive and $\bx_j$ satisfies
$$
\left\|\bx_{j}-\bx_{i}\right\|^2>p(1.5+0.75\sigma_{l(\bx_i)}^2), \quad   l(\bx_i)>0. 
$$
Using the union bound, with probability at least $1-2N\exp\{ -p/128\}$, the distance between any positive and $\bx_j$ satisfies
$$
\left\|\bx_{j}-\bx_{i}\right\|^2>p(1.5+0.75\sigma_{l(\bx_i)}^2), \quad \forall \bx_i, l(\bx_i)>0.
$$
Given $\sigma_{max}\leq\rho < \sqrt{0.6}$, then with probability at least $1-2N\exp\{ -p/128\}$, the distance between any positive and $\bx_j$ satisfies
$$\left\|\bx_{j}-\bx_{i}\right\|^2>p(1.5+0.75\sigma_{l(\bx_i)}^2)>2.5p\rho^2  \quad \forall \bx_i, l(\bx_i)>0.$$
Therefore, with probability at least $1-4N\exp\{ -p/128\}$,  the distance between any other sample and $\bx_j$ satisfies
$$\left\|\bx_{j}-\bx_{i}\right\|^2>2.5p\rho^2, \quad \forall i\not=j.
$$
Therefore, with probability at least $1-4N\exp\{ -p/128\}$, it follows that
$$\ell(\left\|\bx_{j}-\bx_{i}\right\|; \rho)=\operatorname{min} \left(\frac{\left\|\bx_{j}-\bx_{i}\right\|^{2}}{p \rho^{2}}-2.5, 0\right) = 0, \forall i\not=j.
$$
Therefore, with probability at least $1-4N\exp\{ -p/128\}$, the loss satisfies
$$L(\boldsymbol{\bx_j}; \rho)=\sum\limits_{i=1}^{N} \ell\left(\left\|\bx_{j}-\bx_{i}\right\|; \rho\right)=-F,$$
since $\ell\left(\left\|\bx_{j}-\bx_{j}\right\|; \rho\right)=\ell(0;\rho)=-F$.
\end{proof}
Next, it is proven with high probability, the loss value of a positive is less than $-F.$
\begin{proposition}\label{prop3}
Given $N$ samples from a GMM with outliers, and $\sigma_{max}\leq\rho<\sqrt{0.6}$, then for a positive sample $\bx_j, l(\bx_j)=k>0$, with probability $1-2\exp\{ -p/128\}-\exp\{-(N-1)w_k\}$, the loss is bounded as $L(\bx_j;\rho)<-F.$
\end{proposition}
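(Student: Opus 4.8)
The plan is to exploit the additive structure of the loss. Writing out the sum, $L(\bx_j;\rho) = \ell(\mathbf{0};\rho) + \sum_{i\neq j}\ell(\bx_j-\bx_i;\rho) = -F + \sum_{i\neq j}\ell(\bx_j-\bx_i;\rho)$, and every summand $\ell(\bx_j-\bx_i;\rho)\le 0$. Hence it suffices to exhibit a single index $i\neq j$ for which $\ell(\bx_j-\bx_i;\rho)<0$ strictly, i.e.\ an observation $\bx_i$ lying strictly inside the ball of radius $R_\rho=\rho\sqrt{pF}$ around $\bx_j$. Since $F=2.5$, that ball has radius-squared $R_\rho^2 = 2.5\,p\rho^2$, and Corollary \ref{cor:con_pos} says that any other positive $\bx_i$ from the same cluster $k$ satisfies $\|\bx_j-\bx_i\|^2 < 2.5\,p\sigma_k^2 \le 2.5\,p\rho^2 = R_\rho^2$ with probability at least $1-2\exp\{-p/128\}$, which gives exactly $\ell(\bx_j-\bx_i;\rho) = \|\bx_j-\bx_i\|^2/(p\rho^2) - F < 0$.

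So the first step is to guarantee that such a companion positive exists at all among the remaining $N-1$ samples. Conditioning on $\bx_j$ being a positive of cluster $k$, each of the other $N-1$ observations independently has label $k$ with probability $w_k$, so the probability that none of them does is $(1-w_k)^{N-1}\le \exp\{-(N-1)w_k\}$ by the elementary inequality $1-x\le e^{-x}$. On the complementary event, fix one such companion $\bx_i$ (say the one of smallest index); conditionally on its label, $\bx_i\sim\mathcal{N}(\bmu_k,\sigma_k^2 I_p)$ is independent of $\bx_j$ given $\bmu_k$, so Corollary \ref{cor:con_pos} applies to the pair $(\bx_j,\bx_i)$. Combining the two events by a union bound, with probability at least $1 - \exp\{-(N-1)w_k\} - 2\exp\{-p/128\}$ there is a companion positive $\bx_i$ and it lies strictly inside the support ball, so $\ell(\bx_j-\bx_i;\rho)<0$ and therefore $L(\bx_j;\rho) \le -F + \ell(\bx_j-\bx_i;\rho) < -F$, as claimed.

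The only delicate point is the conditioning: Corollary \ref{cor:con_pos} concerns a \emph{fixed} pair of same-cluster positives, whereas here the companion index is itself random, depending on which of the $N-1$ labels equal $k$. The clean way around this is to not single out an index at all, but to condition on the full label vector and on $\bmu_k$: under that conditioning the positives of cluster $k$ are i.i.d.\ $\mathcal{N}(\bmu_k,\sigma_k^2 I_p)$, so Corollary \ref{cor:con_pos} applies to any fixed pair within the conditional model, and taking expectation over the label vector (whose contribution is the $(1-w_k)^{N-1}$ term) yields the stated bound. This is the step to write carefully; everything else is a one-line consequence of Corollary \ref{cor:con_pos} together with the choice $F=2.5$.
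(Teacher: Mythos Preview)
Your proposal is correct and follows essentially the same approach as the paper: bound the probability that no other observation from cluster $k$ exists among the remaining $N-1$ samples by $(1-w_k)^{N-1}\le\exp\{-(N-1)w_k\}$, then apply Corollary~\ref{cor:con_pos} to the pair $(\bx_j,\bx_a)$ to get $\|\bx_j-\bx_a\|^2<2.5p\sigma_k^2\le 2.5p\rho^2$, and conclude $L(\bx_j;\rho)\le \ell(0;\rho)+\ell(\bx_j-\bx_a;\rho)<-F$. Your discussion of the conditioning on the random companion index is in fact more careful than the paper's own argument, which simply fixes such an $\bx_a$ without comment.
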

\begin{proof}
The probability that a sample of size $N-1$ contains no elements from cluster $S_k$ is 
\[
(1-w_k)^{N-1}\leq \exp\{-(N-1)w_k\}.
\] 
Therefore, with probability at least $1-\exp\{-(N-1)w_k\}$, there is at least one more sample $\bx_a, a\not=j$ besides $\bx_j$ in cluster $S_{k}$.\\
From Corollary \ref{cor:con_pos}, with probability at least $1-2\exp\{-p/128\}$, the distance between $\bx_a$ and $\bx_j$ is bounded as
$$
\left\|\bx_{j}-\bx_{a}\right\|^2<2.5 p\sigma_{k}^2.$$
Given  $\sigma_{max}\leq\rho<\sqrt{0.6}$, with probability at least $1-2\exp\{-p/128\}$, the distance between $\bx_a$ and $\bx_j$ is bounded as
$$
\left\|\bx_{j}-\bx_{a}\right\|^2<2.5 p\sigma_{k}^2 \leq2.5 p\sigma_{max}^2\leq 2.5 p\rho^2.
$$
Therefore, with probability at least $1-2\exp\{ -p/128\}-\exp\{-(N-1)w_k\}$, the following equality holds $$\ell(\left\|\bx_{j}-\bx_{a}\right\|; \rho)=\operatorname{min} \left(\frac{\left\|\bx_{j}-\bx_{a}\right\|^{2}}{p \rho^{2}}-2.5, 0\right) < 0.
$$
Therefore, with probability at least $1-2\exp\{ -p/128\}-\exp\{-(N-1)w_k\}$, the loss is bounded above as  $$L(\boldsymbol{\bx_j}; \rho)=\sum\limits_{i=1}^{N} \ell\left(\left\|\bx_{j}-\bx_{i}\right\|; \rho\right)\leq \ell\left(\left\|\bx_{j}-\bx_{a}\right\|; \rho\right)+\ell\left(\left\|\bx_{j}-\bx_{j}\right\|; \rho\right) <-F.$$
\end{proof}

\begin{proposition}\label{prop4}
Given $N$ samples from a GMM with outliers, with $w_i\geq a/m, i=\overline{1,m}$ for some $a>0$ and $\sigma_{max
}\leq\rho<\sqrt{0.6}$, randomly select a set $S$ of $|S|=n$ subsamples from it, then with probability at least $1-m\exp\{-na/m\}-2m\exp\{ -p/128\}-m\exp\{-a(N-1)/m\}$ 
for each $k=\overline{1,m}$ there exists $\bx_j\in S_k=\{\bx \in S,l(x)=k\}$ such that  
$L(\bx_j,\rho)<-F$.
\end{proposition}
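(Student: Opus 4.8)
The strategy is a union bound over the $m$ positive clusters. For a fixed $k\in\overline{1,m}$ I would show that, with probability at least $1-\exp\{-na/m\}-2\exp\{-p/128\}-\exp\{-a(N-1)/m\}$, the subsample $S$ contains a point $\bx_j$ with $l(\bx_j)=k$ and $L(\bx_j;\rho)<-F$; adding the failure probabilities over $k=1,\dots,m$ then yields exactly the stated bound $1-m\exp\{-na/m\}-2m\exp\{-p/128\}-m\exp\{-a(N-1)/m\}$.

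For the per-cluster claim I would first control the event that $S$ misses cluster $k$. Exactly as in the first line of the proof of Lemma \ref{lemma2}, $\mathbb{P}(S_k=\emptyset)=(1-w_k)^n\leq\exp\{-w_k n\}\leq\exp\{-na/m\}$, using $1-x\leq e^{-x}$ and the assumption $w_k\geq a/m$. On the complementary event there is at least one index $j\in S$ with $l(\bx_j)=k$; I would fix one such $\bx_j$. Then I would apply Proposition \ref{prop3} to this $\bx_j$, which is a positive observation from cluster $k$: it gives $L(\bx_j;\rho)<-F$ with probability at least $1-2\exp\{-p/128\}-\exp\{-(N-1)w_k\}$, and $\exp\{-(N-1)w_k\}\leq\exp\{-a(N-1)/m\}$ again since $w_k\geq a/m$. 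Combining this with the bound on $\mathbb{P}(S_k=\emptyset)$ by a union bound gives the per-cluster estimate, and a final union bound over $k=1,\dots,m$ finishes the argument. No new calculation is needed beyond what already appears in Lemma \ref{lemma2} and Proposition \ref{prop3}.

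The one point that needs care — and which I expect to be the main obstacle — is the mild dependence between the random subsampling and the randomness of the data. Proposition \ref{prop3} is stated for a \emph{fixed} positive observation, its probability being over the draw of the remaining $N-1$ observations, whereas here the observation $\bx_j$ is singled out through $S$. To be rigorous one should condition on the realization of $S$ (or invoke exchangeability of the observations) before invoking Proposition \ref{prop3}, and likewise treat the event $\{S_k=\emptyset\}$ as a statement about the subsampling alone; once this is made explicit, for each fixed $k$ the two estimates concern essentially independent sources of randomness and the union bounds combine as above.
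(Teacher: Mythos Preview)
Your proposal is correct and follows essentially the same approach as the paper: combine the ``$S$ hits cluster $k$'' bound from Lemma~\ref{lemma2} with Proposition~\ref{prop3} applied to one representative per cluster, and then take a union bound over $k=1,\dots,m$. The paper organizes it slightly differently---first invoking Lemma~\ref{lemma2} globally to get representatives in every cluster simultaneously, then applying Proposition~\ref{prop3} $m$ times---but the ingredients and the resulting probability bound are identical; your remark about the dependence between the subsampling and the data randomness is a point the paper's proof simply glosses over.
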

\begin{proof}
According to Lemma \ref{lemma2}, the probability that a sample $S$ of size $n$ contains at least one observation from each cluster is
\[
1-\sum_{i=1}^m(1-w_i)^n \geq 1- m (1-a/m)^n \geq 1- m\exp\{-na/m\},
\]
and without loss of generality let $\bx_j$ be the observation from cluster $S_k, k=\overline{1,m}$.
Applying Proposition \ref{prop3} repeatedly to these $m$ samples and using the union bound, with probability at least $1-2m\exp\{ -p/128\}-\sum_{i=1}^m\exp\{-(N-1)w_i\}$, the loss is bounded above as $$L(\boldsymbol{\bx_j}, \rho)<-F.$$
Since $\forall w_i \geq a/m$, therefore $\sum_{i=1}^m\exp\{-(N-1)w_i\} \leq m\exp\{-a(N-1)/m\}$. 
Therefore, with probability at least $1-m\exp\{-na/m\}-2m\exp\{ -p/128\}-m\exp\{-a(N-1)/m\},$ for each $k=\overline{1,m}$ there exists $\bx_j\in S_k$ such that  
$$L(\bx_j,\rho)<-F.$$
\end{proof}
\section{Proofs of Theorem \ref{thm1} and Corollary \ref{cor7}}
\label{sec:D}
In this section, the proofs of Theorem \ref{thm1} and Corollary \ref{cor7} are given.
\begin{proof}\textbf{of Theorem} \ref{thm1}.
From Proposition \ref{prop2}, for a negative sample $\bx_j$, with probability at least $1-4N\exp\{ -p/128\}$, $L(\bx_j,\rho)=-F$, then for all the negatives, with probability at least $1-4N^2\exp\{-p/128\}$, the loss satisfies
$L(\bx_j,\rho)=-F.$\\
From Proposition \ref{prop4}, with probability at least $1-m\exp\{-na/m\}-2m\exp\{ -p/128\}-m\exp\{-a(N-1)/m\}$, for each $k=\overline{1,m}$ there is $\bx_j\in S_k, L(\bx_j,\rho)<-F$.\\
Combining  Proposition \ref{prop2} and Proposition \ref{prop4}, with probability at least $1-4N^2\exp\{ -p/128\}-m\exp\{-na/m\}-2m\exp\{ -p/128\}-m\exp\{-a(N-1)/m\}$, only positives will be selected at step 8 of SCRLM.\\
From Proposition \ref{prop1}, with probability at least $1-6N^2\exp\{-p/128\}$, all positives are correctly identified in Steps 9 and 17 and removed from negatives.\\
So with probability at least  $$1-10N^2\exp\{ -p/128\}-m\exp\{-na/m\}-2m\exp\{ -p/128\}-m\exp\{-a(N-1)/m\},$$
SCRLM will have $100\%$ accuracy. 
\end{proof}

\begin{proof} \textbf{of Corollary} \ref{cor7}.
The condition 
\[p >128  (2\log {N} +\log \frac{40}{\delta})\]
is equivalent to
\[10N^2\exp\{ -p/128\}<\frac{\delta}{4}.\]
The condition \[n>\frac{m}{a}(\log 4m-\log {\delta})\]
is equivalent to
\[m\exp(-na/m)<\frac{\delta}{4}.\]
The condition \[p >128  (\log 8m - \log {\delta})\]
is equivalent to
\[2m\exp\{ -p/128\}<\frac{\delta}{4}.\]
Finally, the condition \[
N>\frac{m}{a}(\log 4m-\log{\delta})+1.
\]
is equivalent to:
\[m\exp\{-a(N-1)/m\}<\frac{\delta}{4}.\]
These conditions together imply that
\[   1- 10N^2\exp\{ -p/128\}-m\exp(-na/m)-2m\exp\{ -p/128\}-m\exp\{-a(N-1)/m\}>1-\delta.
\]
According to Theorem \ref{thm1}, SCRLM has 100\% accuracy with probability at least $1-\delta$.
\end{proof}
\end{document}